\theoremstyle{plain}
\newtheorem{proposition}{Proposition}
\newtheorem{theorem}[proposition]{Corollary}
\theoremstyle{definition}
\theoremstyle{remark}
\title{Inductive biases of multi-task learning and finetuning: multiple regimes of feature reuse}
\author{Samuel Lippl$^{*}$\\Center for Theoretical Neuroscience\\Columbia University\\New York, NY\\\texttt{samuel.lippl@columbia.edu}\And
Jack Lindsey$^{*,**}$\\Anthropic\\San Francisco, CA\\\texttt{jackwlindsey@gmail.com}}
\begin{document}
\def\thefootnote{*}\footnotetext{Equal contributions.}
\def\thefootnote{**}\footnotetext{Work primarily conducted while at the Center for Theoretical Neuroscience, Columbia University.}\def\thefootnote{\arabic{footnote}}

\maketitle

\begin{abstract}
Neural networks are often trained on multiple tasks, either simultaneously (multi-task learning, MTL) or sequentially (pretraining and subsequent finetuning, PT+FT). In particular, it is common practice to pretrain neural networks on a large auxiliary task before finetuning on a downstream task with fewer samples. Despite the prevalence of this approach, the inductive biases that arise from learning multiple tasks are poorly characterized. In this work, we address this gap.
We describe novel implicit regularization penalties associated with MTL and PT+FT in diagonal linear networks and single-hidden-layer ReLU networks. These penalties indicate that MTL and PT+FT induce the network to reuse features in different ways. 1) Both MTL and PT+FT exhibit biases towards feature reuse between tasks, and towards sparsity in the set of learned features. We show a ``conservation law'' that implies a direct tradeoff between these two biases.
2) PT+FT exhibits a novel ``nested feature selection'' regime, not described by either the ``lazy'' or ``rich'' regimes identified in prior work, which biases it to \emph{rely on a sparse subset} of the features learned during pretraining. This regime is much narrower for MTL. 3) PT+FT (but not MTL) in ReLU networks benefits from features that are correlated between the auxiliary and main task. We confirm these findings empirically with teacher-student models, and introduce a technique -- weight rescaling following pretraining -- that can elicit the nested feature selection regime. Finally, we validate our theory in deep neural networks trained on image classification. We find that weight rescaling improves performance when it causes models to display signatures of nested feature selection. 
Our results suggest that nested feature selection may be an important inductive bias for finetuning neural networks. 
\end{abstract}

\section{Introduction}
Neural networks are often trained on multiple tasks, either simultaneously (``multi-task learning,'' henceforth MTL, see \cite{vafaeikia_brief_2020,zhang_survey_2022}) or sequentially (``pretraining'' and subsequent ``finetuning,'' henceforth PT+FT, see \cite{du_survey_2022,zhou_comprehensive_2023}).
Empirically, models can transfer knowledge from auxiliary tasks to improve performance on tasks of interest.
However, theoretical understanding of how auxiliary tasks influence learning and generalization is limited.

Auxiliary tasks are especially useful when there is less data available for the target task. Modern ``foundation models,''  trained on data-rich general-purpose auxiliary tasks (like next-word prediction or image generation) before adaptation to downstream tasks, are a timely example of this use case \cite{bommasani_opportunities_2022}. Auxiliary tasks are also commonly used in reinforcement learning, where performance feedback can be scarce \cite{jaderberg_reinforcement_2016}. Intuitively, auxiliary task learning biases the target task solution to use representations shaped by the auxiliary task. 
When the tasks share common structure, this influence may enable generalization from relatively few training samples for the task of interest. However, it can also have downsides, causing a model to inherit undesirable biases from auxiliary task learning \cite{wang_overwriting_2023, steed_upstream_2022}.

An influential strategy in the literature on the theory of \emph{single-task} learning has been to characterize the \emph{implicit regularization} conferred by the combination of network architecture and optimization algorithm \cite{neyshabur_implicit_2017,gunasekar_implicit_2018,soudry_implicit_2018,lyu_gradient_2020,chizat_implicit_2020}. Alternatively, some others characterize the effects of \emph{explicit parameter regularization} (e.g. an $\ell_2$-penalty on weights) on the inductive bias of networks towards learning certain functions \cite{savarese_how_2019,ongie_function_2019,dai_representation_2021}. Compared to the single-task case, the inductive bias of MTL and PT+FT, whether obtained via explicit regularization or implicit regularization induced by optimization dynamics, is less well understood. Here we make progress on this question by studying inductive biases of MTL and PT+FT in two network architectures that have been extensively theoretically studied: diagonal linear networks, and densely connected networks with one hidden layer and a ReLU nonlinearity. We then demonstrate that our insights transfer to practically relevant scenarios by studying deep neural networks trained on image classification tasks.

Our specific contributions are as follows:
\begin{itemize}
    \item We characterize regularization penalties associated with MTL or PT+FT in both diagonal linear networks and single hidden-layer ReLU networks (Section~\ref{sec:norms}).
    \item We find that both MTL and PT+FT are biased towards solutions that reuse features between tasks, and that rely on a sparse set of features (Section~\ref{sec:shared}).
    \item We then find that under suitable scalings, PT+FT exhibits a ``nested feature selection'' regime, distinct from previously characterized ``rich'' and ``lazy'' regimes, which biases finetuning to extract sparse subsets of the features learned during pretraining (Sections~\ref{sec:conservation-law} and~\ref{sec:nested}).
    \item We find that PT+FT in ReLU networks can benefit from correlated (not just identical) features between the main and auxiliary task, but only when coefficients of features in the main task weights are of comparable magnitude (Section~\ref{sec:corr}).
    \item Finally, we study deep neural networks trained on natural image data (CIFAR-100 and ImageNet) (Section~\ref{sec:dnns}). Remarkably, we find that rescaling weights before finetuning improves accuracy in ResNets. Our analysis of the network representations suggests that this weight rescaling also results in the network relying on a low-dimensional subspace of its pretrained representation (i.e.\ exhibiting nested feature selection behavior). Intriguingly, Vision Transformers already exhibit signatures of nested feature selection without weight rescaling, and do not benefit from weight rescaling, suggesting that the nested feature selection regime is beneficial for finetuning performance.
\end{itemize}

\section{Related work}
A variety of studies have characterized implicit regularization effects in deep learning. These include biases toward low-frequency functions \cite{rahaman_spectral_2018}, stable minima in the loss landscape \cite{mulayoff_implicit_2021}, low-rank solutions \cite{huh_low-rank_2023}, and lower-order moments of the data distribution \cite{refinetti_neural_2023}. \citeauthor{chizat_implicit_2020} \cite{chizat_implicit_2020} show that when using cross-entropy loss, shallow (single hidden layer) networks are biased to minimize the $\mathcal{F}_1$ norm, an infinite-dimensional analogue of the $\ell_1$ norm over the space of possible hidden-layer features \cite[see also][]{lyu_gradient_2020, savarese_how_2019}. Other work has shown that implicit regularization for mean squared error loss in nonlinear networks cannot be exactly characterized as norm minimization \cite{razin_implicit_2020}, though $\mathcal{F}_1$ norm minimization is a precise description under certain assumptions on the inputs \cite{boursier_gradient_2022}.


Compared to the body of work on inductive biases of single-task learning, theoretical treatments of MTL and PT+FT are more scarce. Some prior studies have characterized benefits of multi-task learning with a shared representational layer in terms of bounds on sample efficiency \cite{maurer_benefit_2016, wu_understanding_2020,collins_provable_2024}. Others have characterized the learning dynamics of deep linear networks trained from nonrandom initializations, which can be applied to understand finetuning dynamics \cite{braun_exact_2022, shachaf_theoretical_2021}. Similarly, insights on the implicit regularization of gradient descent in linear models has been applied to better understand forgetting and generalization in a continual learning setup \cite{evron_how_2022,lin_theory_2023,evron_continual_2023,goldfarb_joint_2024}. However, while these works demonstrate an effect of pretrained initializations on learned solutions, the linear models they study do not capture the notion of feature learning we are interested in. Finally, teacher-student setups have been used to study the impact of task similarity on continual learning in deep neural networks \cite{lee_continual_2021,lee_maslows_2022}. This methodology could also be applied to our setup (i.e.\ to investigate generalization on a finetuning task), and, similarly, our tools could be applied to continual learning setups. 

A few empirical studies have compared the performance of MTL vs.\ PT+FT in language tasks, with mixed results depending on the task studied \cite{dery_should_2021, weller_when_2022}. Others have observed that PT+FT outperforms PT + ``linear probing'' (training only the readout layer and keeping the previous layers frozen after pretraining), implying that finetuning benefits from the ability to learn task-specific features \cite{kumar_fine-tuning_2022, kornblith_better_2019}.

\noindent\textbf{Inductive biases of diagonal linear networks.}
The theoretical component of our study relies heavily on a line of work \cite{woodworth_kernel_2020, pesme_implicit_2021, azulay_implicit_2021, haochen_shape_2021, moroshko_implicit_2020} that studies the inductive bias of a simplified ``diagonal linear network'' model. Diagonal linear networks parameterize linear maps $f: \mathbb{R}^D \to \mathbb{R}$ as 
\begin{equation}
\label{eq:diaglin}
f_{\vec{w}}(\vec{x})=\vec{\beta}(\vec{w}) \cdot \vec{x}, \qquad \beta_d(\vec{w}):={w^{(2)}_{+,d}}{w^{(1)}_{+,d}} - {w^{(2)}_{-,d}}{w^{(1)}_{-,d}}
\end{equation}
where $\vec{\beta}(\vec{w}) \in\mathbb{R}^{D}$. These correspond to two-layer linear networks in which the first layer consists of one-to-one connections, with duplicate $+$ and $-$ pathways to avoid saddle point dynamics around $\vec{w} = 0$. \citeauthor{woodworth_kernel_2020} \cite{woodworth_kernel_2020} showed that overparameterized diagonal linear networks trained with gradient descent on mean squared error loss find the zero-training-error solution that minimizes $\|\vec{\beta}\|_{2}$, when trained from large initial weight magnitude (the ``lazy'' regime, equivalent to ridge regression). When trained from small initial weight magnitude, networks instead minimize $\|\vec{\beta}\|_{1}$ (the ``rich'' regime).
This bias is a linear analogue of feature learning/feature selection, as a model with an $\ell_1$ penalty tends to learn solutions that depend on a sparse set of input dimensions.

\noindent\textbf{Implicit vs.\ explicit regularization.}
Theoretical work on the inductive biases conferred by different architectures has studied both the implicit regularization induced by gradient descent and explicit $\ell_2$-regularization. Notably, in homogeneous networks trained with crossentropy loss, implicit and explicit regularization yield identical inductive biases in the limit of infinite training time and infinitesimal regularization \cite{lyu_gradient_2020,ji_gradient_2020}, but this does not hold in general \cite{woodworth_kernel_2020}. While \cite{woodworth_kernel_2020,azulay_implicit_2021} are able to characterize the implicit regularization of gradient descent for diagonal linear networks, it is technically much more challenging to derive a similar result for multi-output diagonal linear networks, or for ReLU networks of any kind. In contrast, the impact of \emph{explicit} weight regularization has been characterized for both multi-output diagonal linear networks \cite{dai_representation_2021} and multi-output ReLU networks \cite{yang_better_2022,parhi_near-minimax_2023,shenouda_variation_2024}. 

Our main technical contributions to this theoretical landscape are (1) spelling out the implications of existing results on implicit regularization in diagonal linear networks and (2) providing a novel characterization of the inductive bias induced by applying explicit parameter regularization to the finetuning of ReLU networks from arbitrary initialization. Our choice to study explicit parameter regularization for ReLU networks is made primarily for theoretical tractability; we view these results as a proxy for understanding the more theoretically complex problem of implicit regularization.

\section{Implicit and explicit regularization penalties for MTL and PT+FT}
\label{sec:norms}
\subsection{Theoretical setup}
\noindent\textbf{Architectures.} First, we consider \textbf{diagonal linear networks} with hidden weights $\vec{w}_+,\vec{w}_-\in\mathbb{R}^D$ and $O\in\{1,2\}$ output weights $v_+,v_-\in\mathbb{R}^{O\times D}$. ($O$ is 1 or 2 depending on the training paradigm, see below.) The resulting network function is defined as
\begin{equation}
\label{eq:diaglinmtl}
f_{w,v}(\vec{x})= \beta(w,v)\vec{x},\quad\vec{\beta}_o(w,v):=\vec{v}_{+,o}\circ\vec{w}_+-\vec{v}_{-,o}\circ\vec{w}_-,\quad
\beta(w,v)\in\mathbb{R}^{O\times D}.
\end{equation}

Second, we consider \textbf{ReLU networks} with $H$ hidden neurons, hidden weights $w\in\mathbb{R}^{H\times D}$, and $O\in\{1,2\}$ readout weights $v\in\mathbb{R}^{O\times H}$. The network function is defined as
\begin{align}
    \textstyle
    f_{w,v}(\vec{x})=\sum_{h=1}^H \vec{v}_h(\langle \vec{w}_h,\vec{x}\rangle)_+,
    \label{eq:relu}
\end{align}
where $(\cdot)_+$ is the ReLU nonlinearity. Importantly, the ReLU nonlinearity is homogeneous and, as a result, the network function is invariant to rescaling the hidden weights by $\alpha>0$ and the readout weights by $\tfrac{1}{\alpha}$. It will therefore be useful to consider a re-parameterization in terms of the
\begin{equation}
    \mbox{magnitude }m_h:=v_h\|\vec{w}_h\|_2\mbox{ and direction }\vec{\theta}_h:=\vec{w}_h/\|\vec{w}_h\|_2.
\end{equation}
Under this re-parameterization, an equivalent definition of the network function is given by
\begin{equation}
    \textstyle f_{m,\theta}(\vec{x})=\sum_{h=1}^Hm_h(\langle\vec{\theta}_h,\vec{x}\rangle)_+=f_{w,v}(\vec{x}).
\end{equation}
\paragraph{Training paradigms.}
We consider two datasets: an auxiliary task $X^{aux}\in\mathbb{R}^{n_{aux}\times D},\vec{y}^{aux}\in\mathbb{R}^{n_{aux}}$ and a main task $X^{main}\in\mathbb{R}^{n_{main}\times D},\vec{y}^{main}\in\mathbb{R}^{n_{main}}$.

First, we consider \textbf{multi-task learning} (MTL): simultaneous training on both the auxiliary and main task. In this case, we consider networks with $O=2$ outputs. The first output corresponds to the auxiliary task and the second output to the main task. Accordingly, we denote $\beta^{aux}:=\beta_1$ and $\beta^{main}:=\beta_2$ in the diagonal linear network and $\vec{v}^{aux}:=\vec{v}_1$ and $\vec{v}^{main}:=\vec{v}_2$ in the ReLU network.

Second, we consider \textbf{pretraining} on the auxiliary task and subsequent \textbf{finetuning} on the main task (PT+FT). In this case, we consider networks with a single output ($O=1$), but re-initialize the readout weights before finetuning. Accordingly, we denote the parameters learned after pretraining by $w^{aux}$ and $v^{aux}$, and the parameters learned after finetuning by $w^{main}$ and $v^{main}$. We define $\beta^{main},\beta^{aux}$ (for the diagonal network) and $m^{main},\theta^{main},m^{aux},\theta^{aux}$ (for the ReLU network) analogously.


\subsection{Explicit regularization penalties in multi-task learning}
To theoretically understand the inductive bias of diagonal linear and ReLU networks trained with MTL, we consider the effect of minimizing the $\ell_2$ parameter norm as an approximation of the implicit bias of training with gradient descent from small initialization. We argue that this is a reasonable heuristic. 
First, the analogous result holds in the single-output case for infinitesimally small initialization and two layers  (though not for deeper networks, see \cite{woodworth_kernel_2020}). Second, for cross-entropy loss it has been shown that gradient flow on all positively homogeneous networks (including diagonal linear networks and ReLU networks) converges to a KKT point of a max-margin/min-parameter-norm objective \cite{lyu_gradient_2020}. Finally, explicit $\ell_2$ parameter norm regularization (``weight decay'') is commonly used in practice, making its inductive bias important to understand in its own right as well.

\begin{figure}
    \centering
    \includegraphics[width=0.95\textwidth]{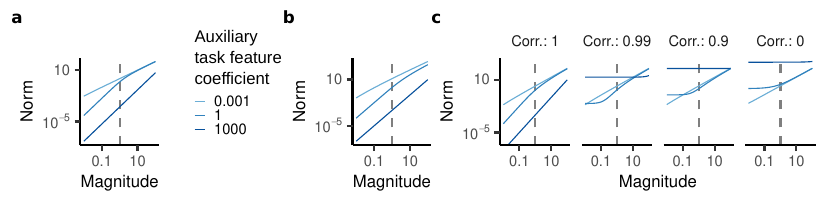}
    \caption{Theoretically derived regularization penalties. \textbf{a}, Explicit regularization penalty associated with multi-task learning. \textbf{b}, Implicit regularization penalty associated with finetuning in diagonal linear networks. \textbf{c}, Explicit regularization penalty associated with finetuning in ReLU networks. This penalty also depends on the changes in feature direction over finetuning (measured by the correlation between the unit-normalized feature directions pre vs. post finetuning).}
    \label{fig:norms}
\end{figure}

We now derive the norms minimized by explicit regularization in MTL:
\begin{theorem}
For the multi-output \textbf{diagonal linear network} defined in Eq.~\ref{eq:diaglinmtl}, a solution $\beta^*$ with minimal parameter norm $\|w\|_2^2+\|v\|_2^2$ subject to the constraint that it fits the training data ($X^{main} \vec{\beta}^{main} = \vec{y}^{main}, X^{aux} \vec{\beta}^{aux} = \vec{y}^{aux}$) also minimizes the following:
\begin{align*}
\textstyle
    \beta^* = \underset{\beta}{\arg\min} \left(2\sum_{d=1}^D\sqrt{(\beta_d^{aux})^2+(\beta_{d}^{main})^2}\right) \quad\text{s.t.} \quad X^{main} \vec{\beta}^{main} = \vec{y}^{main}, \; X^{aux} \vec{\beta}^{aux} = \vec{y}^{aux}.
\end{align*}
\end{theorem}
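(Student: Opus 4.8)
The plan is to descend from parameter space to the space of realized linear maps. Since the data-fitting constraints $X^{main}\vec\beta^{main}=\vec y^{main}$, $X^{aux}\vec\beta^{aux}=\vec y^{aux}$ depend on $(w,v)$ only through $\beta(w,v)$, I would define the cost $C(\beta):=\min\{\|w\|_2^2+\|v\|_2^2 : \beta(w,v)=\beta\}$ (attained, since the objective is coercive) and note that $\min_{(w,v)\ \text{feasible}}\|w\|_2^2+\|v\|_2^2 = \min_{\beta\ \text{feasible}} C(\beta)$. Consequently, if $(w^*,v^*)$ attains the minimal parameter norm and $\beta^* := \beta(w^*,v^*)$, then $\|w^*\|_2^2+\|v^*\|_2^2 \ge C(\beta^*) \ge \min_{\beta\ \text{feasible}}C(\beta) = \|w^*\|_2^2+\|v^*\|_2^2$, forcing equality, so $\beta^*$ minimizes $C$ over feasible $\beta$. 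It therefore suffices to prove the identity $C(\beta)=2\sum_{d=1}^D\sqrt{(\beta^{aux}_d)^2+(\beta^{main}_d)^2}$.

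Next I would observe that $C(\beta)$ decouples across input coordinates: because the hidden layer is one-to-one, the parameters with index $d$ — namely $w_{+,d},w_{-,d}\in\mathbb{R}$ and the outgoing weight vectors $\vec v_{+,\cdot,d},\vec v_{-,\cdot,d}\in\mathbb{R}^2$ — affect only the $d$-th column $\vec b^{(d)}:=(\beta^{aux}_d,\beta^{main}_d)$ of $\beta$, while both $\|w\|_2^2+\|v\|_2^2$ and the constraint $\beta(w,v)=\beta$ split as a sum (resp.\ conjunction) over $d$. Hence $C(\beta)=\sum_d c(\vec b^{(d)})$, where $c(\vec b):=\min\{w_+^2+w_-^2+\|\vec v_+\|_2^2+\|\vec v_-\|_2^2 : w_+\vec v_+ - w_-\vec v_- = \vec b,\ w_\pm\in\mathbb{R},\ \vec v_\pm\in\mathbb{R}^2\}$.

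It remains to show $c(\vec b)=2\|\vec b\|_2$. For the lower bound, the triangle inequality followed by AM–GM gives
\[
\|\vec b\|_2 \;\le\; |w_+|\,\|\vec v_+\|_2 + |w_-|\,\|\vec v_-\|_2 \;\le\; \tfrac12\bigl(w_+^2+\|\vec v_+\|_2^2\bigr) + \tfrac12\bigl(w_-^2+\|\vec v_-\|_2^2\bigr),
\]
so every feasible choice has objective at least $2\|\vec b\|_2$. For the matching upper bound I would use only the ``$+$'' pathway: set $w_-=0$, $\vec v_-=0$, $\vec v_+=\vec b/w_+$, giving objective $w_+^2+\|\vec b\|_2^2/w_+^2$, minimized at $w_+^2=\|\vec b\|_2$ with value $2\|\vec b\|_2$. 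Summing over $d$ yields the claimed formula, and combined with the reduction above this proves the theorem.

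The main obstacle is the lower bound for $c(\vec b)$: one must rule out that splitting the target column between the $+$ and $-$ pathways — exploiting cancellation — could be cheaper than a single pathway. The displayed triangle-inequality/AM–GM chain settles this in one line once set up correctly, the conceptual point being that cancellation between pathways can only inflate the total weight magnitude required; but this is exactly the place where a direct attack on the bilinear constraints would be error-prone. A more mechanical alternative is to impose Lagrangian stationarity for $c$ and verify that interior critical points satisfy $w_-\vec v_- \parallel w_+\vec v_+$, again reducing to the single-pathway computation; I would favor the inequality argument for brevity.
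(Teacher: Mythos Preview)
Your proof is correct and follows essentially the same route as the paper: reduce to a per-coordinate minimization of the parameter norm for a fixed effective weight, then optimize over the free scale to obtain $2\|\vec b\|_2$ via the $z^2+\|\vec b\|_2^2/z^2$ calculation (equivalently, AM--GM). The paper in fact only writes out this argument explicitly for the ReLU case (Appendix~A.1) and appeals to analogy for the diagonal linear network; your version is more complete in that you also supply the triangle-inequality lower bound showing that splitting $\vec b$ across the $+$ and $-$ pathways cannot beat the single-pathway construction, a point the paper does not spell out.
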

This norm is known as the group lasso \cite{yuan_model_2006} and denoted as $\|\cdot\|_{1,2}$. For \textbf{ReLU networks}, by an argument analogous to the one above, parameter norm minimization translates to minimizing $\sum_{h=1}^H\sqrt{(m_h^{aux})^2+(m_h^{main})^2}$ (see Appendix~\ref{app:mt-relu} and \cite{yang_better_2022,shenouda_variation_2024}).

The norm $\|\cdot\|_{1,2}$ is plotted in Fig.~\ref{fig:norms}a. We analyze its impact in Section~\ref{sec:regimes}.
\subsection{Regularization penalties in finetuning}
\textbf{Diagonal linear networks.}
We now consider the behavior of PT+FT in overparameterized diagonal linear networks trained to minimize mean-squared error using gradient flow. We assume that the network is initialized prior to pre-training with infinitesimal weights, and that during pretraining, network weights are optimized to convergence on the training dataset $(X^{aux}, \vec{y}^{aux})$ from the auxiliary task. After pretraining, the second-layer weights ($v_+$ and $v_-$) are reinitialized with constant magnitude $\gamma$. Further, to ensure that the network output pre-finetuning is zero (as in \cite{woodworth_kernel_2020}), we set the values of corresponding positive and negative pathway weights to be equal to their sum following pretraining\footnote{Note that the need for this procedure is an idiosyncrasy of the diagonal linear network setup, and further is unnecessary if $\gamma=0$. Following pretraining, for each input dimension, either the positive or negative pathway weights will be zero, so setting both pathway parameters to equal the sum across pathways has the effect of copying the nonzero value over to the zeroed-out pathway}. The network weights are further optimized to convergence on the main task dataset $(X^{main}, \vec{y}^{main})$. The dynamics of the pretraining and finetuning steps can be derived as a corollary of \cite{woodworth_kernel_2020,azulay_implicit_2021}:
\begin{restatable}{theorem}{diagfinetune}
If the gradient flow solution  \( \vec{\beta}^{aux} \) for the diagonal linear  model in Eq.~\ref{eq:diaglin} during pretraining fits the auxiliary task training data with zero error  (i.e. $X^{aux} \vec{\beta}^{aux} = \vec{y}^{aux}$), and following reinitialization of the second-layer weights and finetuning, the gradient flow solution $\vec{\beta}^*$ after finetuning fits the main task data with zero training error (i.e. $X^{main} \vec{\beta}^{main} = \vec{y}^{main}$), then
\begin{align*}
\vec{\beta}^* &= \underset{\vec{\beta}^{main}}{\arg\min} \, \|\vec{\beta}^{main}\|_Q \quad \text{s.t.} \quad X\vec{\beta} = \vec{y},\\
\label{eq:qnorm}
\|\vec{\beta}^{main}\|_Q&:= \sum_{d=1}^D \left(|\beta_d^{aux}| + \gamma^2\right) q\left(\frac{2\beta_d^{main}}{|\beta_d^{aux}|+ \gamma^2}\right),\quad
q(z) = 2 - \sqrt{4+z^2} + z \cdot \mathrm{arcsinh}(z/2).
\end{align*}
\end{restatable}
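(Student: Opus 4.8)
The plan is to reduce the finetuning phase to gradient flow of a diagonal linear network from an explicit coordinate-wise initialization, integrate the per-coordinate ODEs in closed form using the conserved quantities of two-layer flow, and match the resulting limit to the KKT conditions of the claimed convex program. \emph{Step 1 (the initialization seen by finetuning).} During pretraining from infinitesimal balanced weights, the per-coordinate conserved quantity $(w^{(2)}_{\pm,d})^2-(w^{(1)}_{\pm,d})^2$ of two-layer gradient flow keeps each pathway layer-balanced; hence on the pathway that is active for coordinate $d$ the two layers have equal magnitude and product $\beta_d^{aux}$, so that magnitude is $\sqrt{|\beta_d^{aux}|}$, while the inactive pathway's weights vanish in the infinitesimal-init limit (this is the content of \cite{woodworth_kernel_2020}). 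After reinitializing the second layer to magnitude $\gamma$ and replacing each first-layer pathway weight by the sum across pathways, coordinate $d$ therefore enters finetuning at $w^{(1)}_{+,d}=w^{(1)}_{-,d}=\sqrt{|\beta_d^{aux}|}$ (up to a common sign that turns out to be immaterial) and $w^{(2)}_{+,d}=w^{(2)}_{-,d}=\gamma$, which makes the initial $\vec{\beta}$ equal to zero as required.

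\emph{Step 2 (integrating the finetuning flow).} Since the loss depends on the weights only through $\vec{\beta}^{main}$, the four weights attached to coordinate $d$ evolve under gradient flow driven by the single scalar $r_d(t):=\partial_{\beta_d^{main}}L$. Writing the flow for the combinations $w^{(2)}_{\pm,d}\pm w^{(1)}_{\pm,d}$, each satisfies a linear ODE $\dot z=\pm z\,r_d$ and is thus an explicit exponential of $R_d(t):=\int_0^t r_d(s)\,ds$. Reassembling the products and forming $\beta_d^{main}=w^{(2)}_{+,d}w^{(1)}_{+,d}-w^{(2)}_{-,d}w^{(1)}_{-,d}$, the $(\gamma\pm\sqrt{|\beta_d^{aux}|})^2$ terms combine, and I expect to obtain (independently of the common sign from Step 1)
\begin{equation*}
\beta_d^{main}(t) = -\left(|\beta_d^{aux}|+\gamma^2\right)\sinh\!\left(2R_d(t)\right).
\end{equation*}
Because $r_d(t)=\sum_i X^{main}_{id}(X^{main}_i\vec{\beta}^{main}(t)-y^{main}_i)$, the vector $\vec{R}(t)$ equals $(X^{main})^\top$ applied to the running integral $\vec{\nu}(t)$ of the residuals, so $\vec{R}(\infty)$, when it exists, lies in the row space of $X^{main}$.

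\emph{Step 3 (matching to the $Q$-norm program).} Inverting the relation at $t=\infty$ gives $\mathrm{arcsinh}(\beta_d^{*}/(|\beta_d^{aux}|+\gamma^2))=-2R_d(\infty)$. A direct differentiation shows $q'(z)=\mathrm{arcsinh}(z/2)$ (and $q''(z)=1/\sqrt{4+z^2}>0$), hence $\partial_{\beta_d^{main}}(\tfrac12\|\vec{\beta}^{main}\|_Q)=\mathrm{arcsinh}(\beta_d^{main}/(|\beta_d^{aux}|+\gamma^2))$; therefore $\nabla(\tfrac12\|\vec{\beta}^{*}\|_Q)=-2\vec{R}(\infty)$ lies in the row space of $X^{main}$. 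Since $q$ is convex, $\|\cdot\|_Q$ is convex, so feasibility ($X^{main}\vec{\beta}^{main}=\vec{y}^{main}$) together with this first-order condition are exactly the KKT conditions for $\arg\min\|\vec{\beta}^{main}\|_Q$ under the main-task data constraint; hence $\vec{\beta}^{*}$ is that (unique) minimizer.

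\emph{Main obstacle.} The algebra in Steps 2--3 is essentially mechanical once Step 1 is secured; the delicate points are (i) justifying the exact post-pretraining-plus-reset configuration in the infinitesimal-initialization limit, and (ii) the convergence of gradient flow and the integrability of the residuals that make $\vec{R}(\infty)$ (equivalently $\vec{\nu}(\infty)$) well defined. Both are inherited from the convergence analysis of \cite{woodworth_kernel_2020,azulay_implicit_2021} --- indeed the statement is framed as a corollary of those results --- but they are where the genuine technical care of a fully rigorous argument is concentrated.
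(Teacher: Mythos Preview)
Your proposal is correct and follows the same two-step structure as the paper's proof: first pin down the post-pretraining-plus-reset initialization (first-layer weights $\sqrt{|\beta_d^{aux}|}$ on both pathways, second-layer weights $\gamma$), then characterize gradient flow from that initialization. The only difference is cosmetic: the paper invokes Theorem~4.1 of \cite{azulay_implicit_2021} as a black box for the second step, whereas you unpack that theorem by explicitly integrating the per-coordinate ODEs and matching to the KKT conditions of the $Q$-norm program---your Steps~2--3 are exactly the content of that cited result specialized to this initialization, so the approaches coincide.
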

This corollary is proven in Appendix~\ref{app:diag-ft}. The norm is plotted in Fig.~\ref{fig:norms}b.

\noindent\textbf{ReLU networks.}
We assume that after pretraining, the readout layer is re-initialized with arbitrary new weights $\vec{\gamma}\in\mathbb{R}^H$. We then characterize the solution to the finetuning task that minimizes the $\ell_2$ norm of weight changes from this initialization. This is similar to the explicit weight regularization considered in the previous section, except we now penalize weight changes from a particular initialization rather than the origin. We chose to consider this regularization penalty for two reasons. First, it is sometimes studied in the continual learning setting \cite{lubana_how_2022,evron_continual_2023}. Second, infinitesimal explicit regularization is equivalent to the implicit regularization induced by gradient descent in the case of shallow linear models \cite{gunasekar_implicit_2018}. While this is not true more generally, it is a useful heuristic to motivate our theoretical analysis (which we then validate using our experiments in Section~\ref{sec:regimes}).

We show that this finetuning solution implicitly minimizes the following penalty:
\begin{restatable}{proposition}{relufinetune}
Consider a single-output ReLU network (see Eq.~\ref{eq:relu}) with first-layer weights $\vec{w}^{aux}_h\in\mathbb{R}^D$ after pretraining, and second-layer weights re-initialized to $\vec{\gamma}\in\mathbb{R}^H$. The solution to the finetuning task that minimizes the $\ell_2$ norm of changes in the weights, i.e.\ minimizes \ $\sum_{h=1}^H\|\vec{w}_h-\vec{w}^{aux}_h\|_2^2+(v_h-\gamma_h)^2$, is equivalent to the solution that minimizes
\begin{align*}
    R(\theta^{main},m^{main}|\theta^{aux},m^{aux})&:=\sum_{h=1}^Hr(\vec{\theta}^{main}_h,m^{main}_h|\vec{\theta}^{aux}_h,m^{aux}_h),\\
    r(\vec{\theta}^{main}_h,m^{main}_h|\vec{\theta}_h^{aux},m_h^{aux})&:=(m^{main}_h/u^*-\gamma_h)^2+(u^*)^2+m_h^{aux}-2u^*\sqrt{m_h^{aux}}\langle\vec{\theta}_h^{main},\vec{\theta}_h^{aux}\rangle,
\end{align*}
where $u^*$ is the unique positive real root of
\begin{equation}
    -(m^{main}_h)^2+\gamma_h m_h^{main}u-m_{h}^{aux}\langle\vec{\theta}^{main}_h,\vec{\theta}_h^{aux}\rangle u^3+u^4=0.
\end{equation}
\end{restatable}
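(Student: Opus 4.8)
The plan is to exploit the rescaling invariance of the ReLU parameterization to split the $H$ coupled problems into $H$ independent one-dimensional minimizations, one per hidden unit, and then to solve each of those explicitly.

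\textbf{Reduction to per-unit scale problems.} First I would note that both the objective $\sum_{h}\|\vec{w}_h-\vec{w}^{aux}_h\|_2^2+(v_h-\gamma_h)^2$ and the network function decompose over hidden units, and that the function depends on $(\vec{w}_h,v_h)$ only through the pair $(\vec{\theta}^{main}_h,m^{main}_h)$, with $\vec{\theta}^{main}_h=\vec{w}_h/\|\vec{w}_h\|_2$ and $m^{main}_h=v_h\|\vec{w}_h\|_2$. Hence the minimal penalty over all $(w,v)$ fitting the main task equals the minimum, over all decompositions $\{(\vec{\theta}^{main}_h,m^{main}_h)\}_h$ that fit the main task, of $\sum_h\big[\min_{(\vec{w}_h,v_h)\mapsto(\vec{\theta}^{main}_h,m^{main}_h)}(\|\vec{w}_h-\vec{w}^{aux}_h\|_2^2+(v_h-\gamma_h)^2)\big]$. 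So it suffices to evaluate the inner (per-unit) minimum and show it equals $r(\vec{\theta}^{main}_h,m^{main}_h\mid\vec{\theta}^{aux}_h,m^{aux}_h)$, after which summing over $h$ gives $R$ and the equivalence of the two optimization problems follows.

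\textbf{Solving one unit.} For a fixed unit with $m^{main}\neq 0$ (dropping the subscript $h$), every realizing pair has the form $\vec{w}=u\,\vec{\theta}^{main}$, $v=m^{main}/u$ for a scale $u=\|\vec{w}\|_2>0$. Substituting and writing $c:=\langle\vec{\theta}^{main},\vec{\theta}^{aux}\rangle$ and $m^{aux}:=\|\vec{w}^{aux}\|_2^2$ (so $\sqrt{m^{aux}}=\|\vec{w}^{aux}\|_2$), the per-unit penalty becomes the single-variable function
\[
g(u)\;=\;\big(m^{main}/u-\gamma\big)^2+u^2+m^{aux}-2u\sqrt{m^{aux}}\,c,
\]
which is precisely the expression appearing in $r$, evaluated at scale $u$. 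Since $g(u)\to+\infty$ as $u\to0^+$ (from the $(m^{main})^2/u^2$ term) and as $u\to+\infty$ (from the $u^2$ term), and $g$ is smooth on $(0,\infty)$, a global minimizer exists at an interior stationary point. Multiplying $g'(u)=0$ through by $u^3/2$ clears denominators and yields a quartic stationarity condition in $u$ of the form displayed in the statement; I would call its relevant positive root $u^*$. Plugging $u=u^*$ back into $g$ gives $g(u^*)=r$. The degenerate units with $m^{main}_h=0$ are handled directly: there the penalty collapses to $\min(\gamma_h^2,\,m^{aux}_h)$ and $u^*$ degenerates to $\sqrt{m^{aux}_h}\,c_h$ (or to $0$ when $c_h\le0$), which I would state as a separate case.

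\textbf{Main obstacle.} The delicate step is the final part of the one-dimensional analysis: showing that $g$ has a unique interior minimizer and that it coincides with the claimed positive root $u^*$ of the quartic, rather than being merely one of several critical points. A clean way to organize this is geometric: up to the additive constant $m^{aux}(1-c^2)$, $g(u)$ equals the squared distance from the point $(\sqrt{m^{aux}}\,c,\ \gamma)$ to the hyperbola branch $\{(u,\ m^{main}/u):u>0\}$, so its stationary points are the feet of normals from that point to the branch. Existence of a positive root is immediate (the quartic equals $-(m^{main})^2<0$ at $u=0^+$ and $\to+\infty$ as $u\to\infty$); to rule out additional positive roots one would analyze the sign pattern / monotonicity of the quartic's derivative (a cubic) and confirm that the surviving root is a global rather than local minimum. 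I expect this to be where the real work lies, since without further structural assumptions the quartic need not have a single positive root (equivalently, $g$ need not be convex), so pinning down the conditions under which $u^*$ is unique and globally optimal — and verifying them in the setting of the proposition — is the crux of the argument.
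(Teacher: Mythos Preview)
Your approach is essentially identical to the paper's: decompose over hidden units, parameterize each unit by the scale $u=\|\vec{w}_h\|_2>0$, observe that the resulting one-variable function diverges at both endpoints so any unique stationary point is the global minimum, and derive the quartic by clearing denominators in $g'(u)=0$. On the point you flag as the main obstacle---uniqueness of the positive real root---the paper does not prove it either; it simply remarks that the root is ``unique, in all empirical cases we explored,'' so your concern is well placed but you should not expect (or need) a full analytic argument to match the paper.
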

We prove the proposition in Appendix~\ref{app:relu-ft}.
It implies that the regularization penalty associated with finetuning  in the ReLU network only depends on the correlation $\rho_h:=\langle\vec{\theta}_h^{main},\vec{\theta}_h^{aux}\rangle$ between the first-layer feature weights before and after finetuning, and the magnitudes of the weights of these features. We plot this penalty in Fig.~\ref{fig:norms}c.
\section{Implications of the theory: multiple regimes of feature reuse}
\label{sec:regimes}
\subsection{Sample efficiency in teacher-student models}
To validate these theoretical characterizations and illustrate their consequences, we now perform experiments in a teacher-student setup. In the diagonal linear network case, we consider a linear regression task defined by $\vec{w}\in\mathbb{R}^{1000}$ with a sparse set of $k$ non-zero entries. We sample two such vectors, corresponding to ``auxiliary'' and ``main'' tasks, varying the number of non-zero entries $k_{aux}$ and $k_{main}$, and the number of shared features (overlapping non-zero entries). We then uniformly sample input vectors $\vec{x}\in\mathbb{R}^{1000}$ from the unit sphere, using the ground-truth weights to generate the target. We train on 1024 auxiliary samples and vary the number of main task samples.

In the ReLU network case, we consider a ``teacher'' ReLU network with a sparse number of units (i.e.\ a low-dimensional hidden layer) and different kinds of overlaps (e.g.\ shared, correlated, or orthogonal features) between the auxiliary and main task. We randomly sample input data $\vec{x}\in\mathbb{R}^{15}$ from the unit sphere and use the teacher network to generate the target. We train on 1024 auxiliary samples and vary the number of main task samples. During finetuning, we randomly re-initialize the readout weights using a normal distribution with a variance of $10^{-3}\sqrt{2/H}$.
\subsection{PT+FT and MTL benefit from sparse and shared features}
\label{sec:shared}
\begin{figure}
    \centering
    \includegraphics[width=0.95\textwidth]{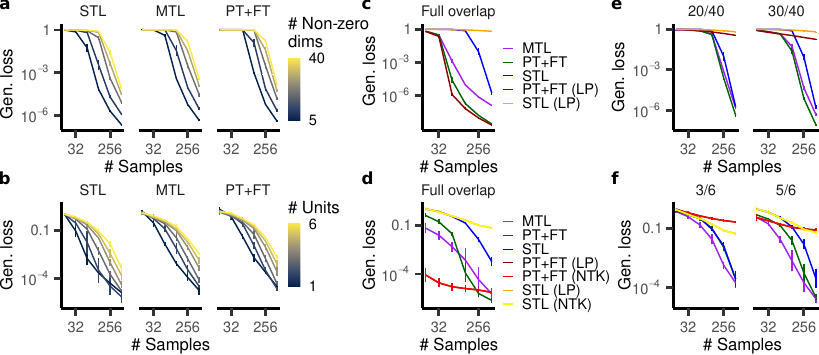}
    \caption{PT+FT and MTL benefit from feature sparsity and reuse. \textbf{a},\textbf{b}, Generalization loss for a) diagonal linear networks and b) ReLU networks trained on a) a linear model with distinct active dimensions and b) a teacher network with distinct units between auxiliary and main task (STL: single-task learning). MTL and PT+FT benefit from a sparser teacher on the main task. \textbf{c},\textbf{d}, Generalization loss for c) diagonal linear networks and d) ReLU networks trained on a teacher model sharing all features between the auxiliary and main task. PT+FT and MTL both generalize better than STL. \textbf{e},\textbf{f}, Generalization loss for e) diagonal linear networks and f) ReLU networks trained on a teacher model with overlapping features. Networks benefit from feature sharing \emph{and} can learn new features.}
    \label{fig:shared}
\end{figure}
\noindent\textbf{Feature sparsity.}
To understand the impact of the derived penalties in the case of features either used or not used during pretraining, we consider their limit behavior for very large or very small pretrained features. First, we consider the limit $\frac{|\beta_d|}{|\beta_d^{aux}| + \gamma^2} \to \infty$ (capturing the case of a feature not used during pretraining). In this limit, the MTL penalty converges to $2|\beta_d|$. Similarly, for finetuning in diagonal linear networks, the penalty converges to $c|\beta_d|$ where $c \sim \mathcal{O}\left( \log\left(1/(|\beta_d^{aux}| + \gamma^2)\right)\right)$ (per an analysis in \cite{woodworth_kernel_2020}). For new features, both networks therefore have an $\ell_1$ norm minimization bias, suggesting that they tend to learn a sparse set of new features (just like in the single-task case).

To test this insight, we consider a teacher-student setup without any shared features between the auxiliary and main task, and vary the number of main task features. Indeed, we found that both MTL and PT+FT have a more rapidly decreasing generalization loss for fewer features (just like single-task learning (STL)) (Fig.~\ref{fig:shared}a). We further confirmed that they learned a sparse set of weights (Fig.~\ref{fig:supp-diagonal}).

The same phenomenon holds true for the ReLU network penalties as well. We considered a teacher-student network with six auxiliary task features and one to six uncorrelated main task features. Again, both MTL and PT+FT have a lower generalization loss for fewer features (Fig.~\ref{fig:shared}b).

\noindent\textbf{Feature sharing.}
Next, we consider the opposite limit, i.e.\ large pretrained features: $\frac{|\beta_d|}{|\beta_d^{aux}| + \gamma^2} \to 0$. In this limit, both the MTL and the PT+FT penalty for diagonal linear networks converges to $\frac{\beta_d^2}{|\beta_d^{aux}|}$, a weighted $\ell_2$ bias. This implies that the networks preferentially use large pretrained features. To test this, we consider a teacher-student setup with fully overlapping dimensions. Indeed, both MTL and PT+FT outperform STL in this case (Fig.~\ref{fig:shared}c). Notably, they perform similarly to a network where we only finetune the second layer (PT+FT (Linear Probing, LP)), which exactly implements the weighted $\ell_2$ bias.

In the case of ReLU networks, we considered a teacher network with the same six features for the auxiliary and main task. Again, we found that MTL and PT+FT outperformed STL, though training a linear readout from a model with fixed features (either by using the hidden layer (PT+FT (LP)) or the neural tangent kernel (PT+FT (NTK))) generalized even better (Fig.~\ref{fig:shared}d).

\noindent\textbf{Simultaneous sparsity and feature sharing.}
Finally, our analysis above considered the limit behavior of each feature separately. This suggests that models should be able to 1) preferentially rely on pretrained features and 2) when necessary, learn a sparse set of new features. To test this insight, we consider partially overlapping teacher models. In diagonal linear networks, we consider teacher models with forty auxiliary and main task features, with twenty or thirty of those features overlapping (Fig.~\ref{fig:shared}e). On the one hand, both PT+FT and MTL outperformed STL, indicating that they were able to benefit from the pretrained features. On the other hand, they also performed better than the PT+FT (LP) model which only finetuned the second layer (and therefore did not have a sparse inductive bias), indicating that they tended to learn a sparse set of new features. In ReLU networks, we consider teacher models with six auxiliary and main task features, varying their overlap. Again, we find that MTL and PT+FT outperformed both STL and PT+FT (LP), indicating that they benefitted from feature learning by implementing an inductive bias towards sparse and shared features.

\noindent\textbf{Differences between the MTL and PT+FT norms.}
So far, we have highlighted several similarities between the MTL and PT+FT norms in diagonal linear networks: they tend towards the $\ell_1$ norm for small auxiliary features and a weighted $\ell_2$ norm for large auxiliary features. In the next section, we will highlight an important difference that arises in the intermediate regime. Here we briefly highlight a difference in the limit behavior for diagonal linear networks: the relative weights of the $\ell_1$- and weighted $\ell_2$-penalty are different between MTL and PT+FT. In particular, in the $\ell_1$ penalty limit, there is an extra factor of order $\mathcal{O}\left( \log\left(1/(|\beta_d^{aux}| + \gamma^2)\right)\right)$ in the PT+FT penalty. Assuming small initializations, this factor tends to be larger than $2$, the corresponding coefficient in the MTL penalty. Thus, PT+FT is more strongly biased toward reusing features from the auxiliary task compared to MTL. We are careful to note, however, that in the case of ReLU networks this effect is complicated by a qualitatively different phenomenon with effects in the reverse direction (see Section~\ref{sec:corr}).

\subsection{A conservation law}
\label{sec:conservation-law}
We now turn our attention to the intermediate regime where the coefficients of a feature are of similar magnitude in the auxiliary and main tasks. We define two functions for a given penalty $P(\beta_d^{main},\beta_d^{aux})$ (where $\beta_d^{main}$ is the main task feature coefficient and $\beta^{aux}_d$ is the auxiliary task feature coefficient): 1) the ``$\ell$-order,'' $\tfrac{\partial\log P}{\partial\log\beta^{main}_d}$, which measures, locally, how strongly $P$ changes with increasing $\beta^{main}_d$ and 2) the ``feature dependence'' (FD), $\tfrac{\partial\log P}{\partial\log\beta_d^{aux}}$, which measures, locally, how much $P$ decreases for a larger auxiliary feature. In the previous section, we found that for $\beta_d^{aux}\to0$, $\ell\text{-order}\to1$ and $\text{FD}\to0$, i.e.\ the penalty becomes $\ell_1$-like and does not depend on the magnitude of the auxiliary feature. In contrast, for $\beta_d^{aux}\to1$, $\ell\text{-order}\to2$ and $\text{FD}\to-1$, i.e.\ the penalty becomes $\ell_2$-like and depends inversely on the auxiliary task feature coefficient magnitude.

How are $\ell\text{-order}$ and $\text{FD}$ related for intermediate values of $\beta^{aux}_d$? Remarkably, we find an exact analytical relationship between these measures that holds for all penalties considered here:
\begin{restatable}{proposition}{propnested}
\label{propnested}
    For the MTL and PT+FT penalties derived in both the diagonal linear and ReLU cases, $\ell\text{-order} + FD= 1$.  
\end{restatable}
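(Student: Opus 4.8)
The plan is to recognize that in every case the per-feature penalty $P(\beta^{main}_d,\beta^{aux}_d)$ — i.e.\ the summand indexed by $d$ (or $h$), which is the object on which $\ell\text{-order}$ and FD are evaluated — is a positively homogeneous function of degree $1$ in its two arguments, and then to obtain the claimed identity as a one-line consequence of Euler's theorem for homogeneous functions. Concretely, if $P$ is $C^1$ on the open region of interest and $P(\lambda s,\lambda t)=\lambda P(s,t)$ for all $\lambda>0$, then $s\,\partial_sP+t\,\partial_tP=P$; dividing through by $P>0$ yields $\partial\log P/\partial\log s+\partial\log P/\partial\log t=1$, which is exactly $\ell\text{-order}+\mathrm{FD}=1$ with $s=\beta^{main}_d$ and $t=\beta^{aux}_d$. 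So the whole task reduces to checking degree-$1$ homogeneity of the four penalties.

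For the two MTL penalties this is immediate by inspection: $2\sqrt{(\beta^{aux}_d)^2+(\beta^{main}_d)^2}$ and $\sqrt{(m^{aux}_h)^2+(m^{main}_h)^2}$ are visibly $1$-homogeneous. (Structurally, both are minima of a quadratic parameter norm over a parameterization that is $2$-homogeneous in the weights with readout bilinear in the two layers; rescaling all weights by $\sqrt\lambda$ rescales both realized coefficients by $\lambda$ and the norm by $\lambda$, so the minimizing norm scales by $\lambda$.) For the diagonal-linear PT+FT penalty I would take $\gamma=0$ — the regime relevant to the conservation-law discussion; a nonzero $\gamma$ merely replaces $|\beta^{aux}_d|$ throughout by the effective coefficient $|\beta^{aux}_d|+\gamma^2$, and FD should then be read as the logarithmic derivative in that effective coefficient — so that $P=|\beta^{aux}_d|\,q\!\left(2\beta^{main}_d/|\beta^{aux}_d|\right)$; replacing $(\beta^{main}_d,\beta^{aux}_d)$ by $(\lambda\beta^{main}_d,\lambda\beta^{aux}_d)$ leaves the argument of $q$ invariant and pulls out precisely one power of $\lambda$.

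The only case needing genuine work is the ReLU PT+FT penalty $r$, because the scalar $u^*$ enters only through an implicit quartic; here I would avoid the closed form and argue variationally. Tracing back the derivation, $r(\rho_h,m^{main}_h\mid m^{aux}_h)$ (with $\gamma_h=0$) equals $\min_{u>0}\,g(u)$ with $g(u)=(m^{main}_h)^2/u^2+u^2+m^{aux}_h-2u\sqrt{m^{aux}_h}\,\rho_h$, and the quartic in the statement is just the stationarity condition $\partial_u g=0$. One checks that $g\bigl(\lambda^{1/2}u;\lambda m^{main}_h,\lambda m^{aux}_h,\rho_h\bigr)=\lambda\,g(u;m^{main}_h,m^{aux}_h,\rho_h)$ for every $u>0$, so taking the minimum over $u$ commutes with this rescaling and $r$ is $1$-homogeneous in $(m^{main}_h,m^{aux}_h)$ — a minimum of a $\lambda$-rescaled family of functions is the $\lambda$-rescaled minimum, so differentiability of $u^*$ is not even needed. (One can also verify directly that the positive root of the defining quartic satisfies $u^*(\lambda m^{main}_h,\lambda m^{aux}_h)=\lambda^{1/2}u^*(m^{main}_h,m^{aux}_h)$, by substituting $u=\lambda^{1/2}v$ and observing every monomial picks up the same power of $\lambda$.) Plugging this scaling into the displayed expression for $r$ then shows all four of its terms scale by $\lambda$.

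The step I expect to be the main obstacle is precisely this last one: spotting the correct homogeneity weights in the ReLU case, namely that when both feature coefficients $m^{main}_h,m^{aux}_h$ are scaled by $\lambda$ the implicit variable $u^*$ (essentially a hidden-weight norm) must scale by $\lambda^{1/2}$, not by $\lambda$ — once that bookkeeping is right, the rest of the argument, and the other three penalties, are routine applications of Euler's theorem. A secondary point to be careful about is that the identity holds cleanly for $\gamma=0$; for $\gamma>0$ one either restricts to the $\gamma\to 0$ limit implicit in the conservation-law regime or reinterprets FD relative to the effective auxiliary coefficient, as noted above.
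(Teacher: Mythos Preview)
Your proposal is correct and takes a genuinely different, more conceptual route than the paper. The paper's proof proceeds case by case, directly computing $\ell\text{-order}$ and $\mathrm{FD}$ for each of the four penalties and verifying algebraically that they sum to $1$; for the ReLU PT+FT case it invokes the envelope theorem ($\partial\tilde r/\partial u^*=0$ at the optimum) and then manipulates the resulting expressions using the quartic stationarity condition. Your approach replaces all of this with a single structural observation --- degree-$1$ positive homogeneity of each per-feature penalty in $(\beta^{main}_d,\beta^{aux}_d)$ --- and then reads off the identity from Euler's relation. This is cleaner, explains \emph{why} the identity holds uniformly across all four penalties, and sidesteps implicit differentiation of $u^*$ altogether; the paper's approach, on the other hand, has the compensating advantage of producing explicit closed forms for $\ell\text{-order}$ and $\mathrm{FD}$ separately, which are what get plotted in Fig.~\ref{fig:nested}.

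Two small points. First, the MTL penalty the paper actually differentiates in its proof is the \emph{incremental} form $\sqrt{(m^{aux})^2+m^2}-m^{aux}$, not the raw group-lasso summand you quote; this is what produces the limiting behavior $\ell\text{-order}\to 2$, $\mathrm{FD}\to -1$ for large auxiliary coefficients. Since a difference of two $1$-homogeneous functions is still $1$-homogeneous, your Euler argument covers it without change --- but you should state this version of $P$. Second, your care about $\gamma$ is well placed: the paper's own computations for both PT+FT penalties tacitly take $\gamma=0$ (the $\gamma$ terms quietly drop out between the penalty definition and the final algebra), so your explicit restriction to $\gamma=0$ matches what is actually proved there.
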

\begin{figure} 
    \centering
    \includegraphics[width=0.9\textwidth]{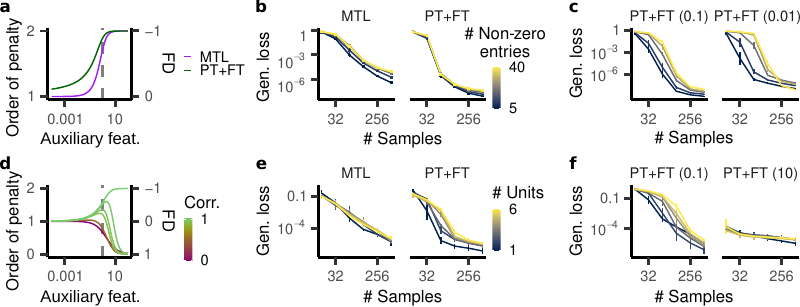}
    \caption{PT+FT (much moreso than MTL) exhibits a nested feature selection regime. \textbf{a-c}, Diagonal linear networks. \textbf{a}, $\ell\text{-order}$/feature dependence plotted for $\beta_d^{main}=1$ and varying the auxiliary task feature coefficient. \textbf{b}, Generalization loss for models trained on a teacher with 40 active units during the auxiliary task and a subset of those units active during the main task. \textbf{c}, Generalization loss for PT+FT models whose weights are rescaled by the factor in the parentheses before finetuning. \textbf{d-f}, ReLU networks. \textbf{d}, $\ell\text{-order}$/feature dependence plotted for the explicit finetuning and MTL penalties, for $m=1$ and varying the auxiliary task feature coefficient. \textbf{e}, Generalization loss for models trained on a teacher network with six active units on the auxiliary task and a subset of those units on the main task. \textbf{f}, Generalization loss for PT+FT models whose weights are rescaled before finetuning.}
    \label{fig:nested}
\end{figure}
See Appendix~\ref{app:proof-prop} for proof.

Thus, there is an exact tradeoff between the sparsity and feature dependence even for intermediate auxiliary task feature coefficient values: an increase in sparsity (i.e.\ a smaller $\ell\text{-order}$) yields a corresponding decrease in feature dependence (i.e.\ FD becomes closer to zero). 

\subsection{The nested feature selection regime}
\label{sec:nested}
Proposition~\ref{propnested} predicts the existence of a novel ``nested feature selection'' regime: for intermediate magnitudes of the auxiliary features, the penalties should encourage both sparsity and feature dependence. To test this prediction in diagonal linear networks, we use a teacher-student setting in which all of the main task features are a subset of the auxiliary task features, i.e. $k_{main} \leq k_{aux}$, and the number of overlapping units is equal to $k_{main}$. Solving this task most efficiently involves "nested feature selection," a bias towards feature reuse \emph{and} towards sparsity among the reused features. We plot $\ell\text{-order}$ and $\text{FD}$ for $\beta_d^{main}=1$ and varying the auxiliary task feature coefficient (Fig.~\ref{fig:nested}a) and find that for $\beta^{aux}_d\approx 1$, both norms exhibit ``lazy regime''-like behavior ($\ell$-order of around 2, and FD near $0$). This predicts that neither MTL nor PT+FT networks should be able to benefit from nested sparsity task structure in this regime, which we confirm empirically: as $k_{main}$ decreases, the networks' sample efficiency does not become substantially better (Fig.~\ref{fig:nested}b).

However, for features with auxiliary task coefficients that are moderately smaller than their main task coefficients, Fig.~\ref{fig:nested}a suggests a broad regime where the $\ell\text{-order}$ is closer to 1 (incentivizing sparsity), but feature dependence remains high. We can produce this behavior in these tasks by rescaling the weights of the network following pretraining by a factor less than $1$. In line with the prediction of the theory, performing this manipulation enables PT+FT to leverage sparse structure \emph{within} auxiliary task features (Fig.~\ref{fig:nested}c), even while retaining their ability to privilege features learned during pretraining above others (see Fig.~\ref{fig:supp-nested}).  By contrast, this regime is much narrower for MTL (Fig.~\ref{fig:nested}a).

For ReLU networks, the MTL penalty is the same as in diagonal linear networks. We plot the  regularization penalty derived for PT+FT, conditioned on various correlations $\langle\theta_h^{main},\theta_h^{aux}\rangle$ between the post-pretraining and post-finetuning feature weights (Fig.~\ref{fig:nested}d). For features that are fully aligned before and after pretraining, this penalty is again $\ell_2$-like for $m^{aux}\approx 1$. However, in most cases, these features change direction at least a little bit, and we find that in that case the penalty is more $\ell_1$-like while still remaining sufficiently feature-dependent. This suggests that a nested feature selection regime may arise in ReLU networks even when auxiliary task feature coefficients have comparable magnitude to main task feature coefficients. To test this insight, we considered a set of tasks in which the main task solution relies exclusively on a subset of auxiliary task features. We found that PT+FT (even without any weight rescaling) was able to benefit from the nested sparsity structure, but MTL was not (Fig.~\ref{fig:nested}e).\footnote{We note that we had observed this behavior before deriving the PT+FT regularization penalty for ReLU networks. Unlike all other described experiments (for which we derived the described predictions from inspecting the norms before running any simulations), this is therefore a postdiction rather than a prediction.} Performing weight rescaling in either direction following pretraining uncovers the initialization-insensitive (FD near $0$), sparsity-biased ($\ell$-order near 1) rich / feature-learning regime and the initialization-biased (FD near $-1$), no-sparsity-bias ($\ell$-order near 2) lazy learning regime (Fig.~\ref{fig:nested}f). This suggests that for different architectures and different tasks, different rescaling values may be required to enter the nested feature selection regime.

\subsection{PT+FT, but not MTL, in ReLU networks benefits from correlated features}
\label{sec:corr}
\begin{wrapfigure}{R}{0.5\textwidth}
    \centering
    \includegraphics{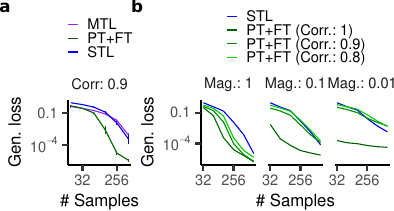}
    \caption{PT+FT, but not MTL, in ReLU networks benefits from correlated features. \textbf{a}, Generalization loss for main task features that are correlated (0.9 cosine similarity) with the auxiliary task features. PT+FT outperforms both MTL and STL. \textbf{b}, Generalization loss for main task features with varying correlation and magnitude (mag.). PT+FT only outperforms STL if the features are either identical in direction or identical in magnitude.}
    \label{fig:corr}
\end{wrapfigure}
The regularization associated with PT+FT yields benefits even when main/auxiliary task directions are correlated but not identical (Fig.~\ref{fig:norms}c). In contrast, MTL cannot softly share features as it encodes correlated features with entirely distinct units. To test this hypothesis, we conduct experiments in which the ground-truth auxiliary and main tasks rely on correlated but distinct features. Indeed, we find PT+FT outperforms STL in this case, whereas MTL only does so in the low-sample setting (Fig.~\ref{fig:corr}a). Notably, if features are identical, MTL outperforms PT+FT in the low-sample setting (Fig.~\ref{fig:shared}d). Thus, PT+FT (compared to MTL) trades off the flexibility to ``softly'' share features for reduced sample-efficiency when such flexibility is not needed.

The analysis in Fig.~\ref{fig:nested}d would predict that ReLU networks can no longer benefit from correlated features if the magnitude of auxiliary task features is much higher than that of main task features. To test this, we varied the magnitude of the main task features and their correlation with auxiliary task features (Fig.~\ref{fig:corr}b). We found that for lower magnitudes, PT+FT still improved performance for the case of identical but not correlated feature directions, confirming our prediction.
\section{Weight rescaling in deep networks gives rise to nested feature selection}
Our analysis has focused on shallow networks trained on synthetic tasks. To test the applicability of our insights, we conduct experiments with convolutional networks (ResNet-18, \cite{he_deep_2016}) on a vision task (CIFAR-100, \cite{krizhevsky_learning_2009}), using classification of two image categories (randomly sampled for each training run) as the primary task and classification of the other 98 as the auxiliary task. As in our experiments above, MTL and PT+FT improve sample efficiency compared to single-task learning (Fig.~\ref{fig:dnns}a).

Our findings in Section~\ref{sec:nested} indicate that the nested feature selection bias of PT+FT can be exposed or masked by rescaling the network weights following pretraining. Such a bias may be beneficial when the main task depends on a small subset of features learned during pretraining, as may often be the case in practice. We experiment with rescaling in our CIFAR setup.
We find that rescaling values less than $1$ improve finetuning performance (Fig.~\ref{fig:dnns}b).
These results suggest that rescaling network weights before finetuning may be practically useful. We corroborate this hypothesis with additional experiments using networks pre-trained on ImageNet \cite{deng_imagenet_2009} (see Fig.~\ref{fig:supp_deepnets}).
\label{sec:dnns}
\begin{figure}
    \centering
    \includegraphics[width=0.95\textwidth]{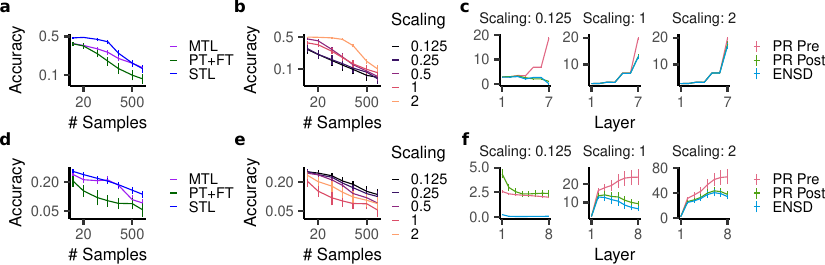}
    \caption{Experiments in deep neural networks trained on CIFAR-100: \textbf{a-c}, ResNet-18, \textbf{d-f}, ViT. \textbf{a},\textbf{d}, Accuracy for MTL, PT+FT, and STL in a) ResNet-18 and d) ViT. \textbf{b},\textbf{e} Accuracy for PT+FT with weight rescaling in b) ResNet-18 and e) ViT. \textbf{c},\textbf{f} The participation ration of c) ResNet-18's and f) ViT's layers before and after finetuning (PR Pre and PR Post) as well as their ENSD.}
    \label{fig:dnns}

\end{figure}

To facilitate comparison of the phenomenology in deep networks with our teacher-student experiments above, we propose a signature of nested feature selection that can be characterized without knowledge of the underlying feature space (since the correct feature basis to analyze is less clear in multi-hidden-layer entworks).  Specifically, we propose to measure (1) the \emph{dimensionality} of the network representation pre- and post-finetuning, and (2) the extent to which the representational structure post-finetuning is shared with / inherited from that of the network following pretraining prior to finetuning. We employ the commonly used \emph{participation ratio} (PR, \cite{gao_theory_2017}) as a measure of dimensionality, and the \emph{effective number of shared dimensions} (ENSD, \cite{giaffar_effective_2023}) as a soft measure of the number of aligned principal components between  two representations. Intuitively, the PR and ENSD of network representations pre- and post-finetuning capture the key phenomena of the nested feature selection regime: we expect the dimensionality of network after finetuning to be lower than after pretraining ($PR(\mathbf{X}_{FT}) < PR(\mathbf{X}_{PT})$), and for nearly all of the representational dimensions expressed by the network post-finetuning to be inherited from the network state after pretraining ($ENSD(\mathbf{X}_{PT}, \mathbf{X}_{FT}) \approx PR(\mathbf{X}_{FT})$). We validate that this description holds in our nonlinear teacher-student experiments with networks in the nested feature selection regime (Fig.~\ref{fig:supp_ensd}). 

Remarkably, we find that the ResNet-18 exhibits the same phenomenology, but only for weights rescaled by a small value (Fig.~\ref{fig:dnns}c). This supports the hypothesis that the observed benefits of rescaling indeed arise from pushing the network into the nested feature selection regime.

Finally, we conduct the same experiment for Vision Transformers (ViT) \cite{dosovitskiy_image_2020}. We confirm that PT+FT improves performance over single-task learning, though MTL offers no similar benefit in this case (Fig.~\ref{fig:dnns}d). We further find that rescaling before finetuning (both by larger and smaller values) decreases generalization performance (Fig.~\ref{fig:dnns}e). Notably, our representational analysis reveals that rescaling by a larger value yields a higher-dimensional subspace both before and after finetuning, whereas rescaling by a smaller value yields a lower-dimensional subspace, but pushes the effective number of shared dimensions down substantially (Fig.~\ref{fig:dnns}f). This indicates that a rescaling value of $1$ may already give rise to the nested feature selection regime and rescaling by a smaller value pushes the ViT towards the pure feature learning regime. Taken together, our results suggest finetuning performance is best when networks operate in the nested feature selection regime, and weight rescaling can push networks into this regime when it does not arise naturally.

\section{Conclusion}
In this work we have provided a detailed characterization of the inductive biases associated with two common training strategies, MTL and PT+FT, in diagonal linear and ReLU networks.
These biases incentivize both feature sharing and sparse task-specific feature learning. In the case of PT+FT, we characterized a novel \emph{nested feature selection} learning regime which encourages sparsity \emph{within} features inherited from pretraining. This insight motivates a simple technique for improving PT+FT performance by pushing networks into this regime, which shows promising empirical results. 

There are several avenues for extending our theoretical work: for example, connecting our derived penalties for ReLU networks (which assumed explicit parameterization) to the implicit regularization induced by dynamics of gradient descent, and extending our theory to the case of cross-entropy loss.  In addition, more work is needed to extend our theory to more complex tasks and larger models. For instance, we are interested in investigating how the weight magnitudes required to enter the nested feature regime depend on architecture and properties of the tasks --- we observed a difference between the rescaling values required for ResNets and Vision Transformers, which our shallow network theory is unable to speak to.  Better understanding the conditions needed for nested feature selection could also inspire more sophisticated interventions than our weight rescaling trick. Finally, we considered a particularly simple multi-task setup, with identical formats between auxiliary and main tasks --- our theory could be extended to cases where the different tasks use different objectives. Nevertheless, our work already provides new and practical insights into multi-task learning and finetuning.

\section*{Acknowledgments}
We are grateful to the members of the Center for Theoretical Neuroscience for helpful comments and discussions. The work was supported by NSF 1707398 (Neuronex), Gatsby Charitable Foundation GAT3708, and the NSF AI Institute for Artificial and Natural Intelligence (ARNI).

\printbibliography

@inproceedings{lubana_how_2022,
	title = {How do quadratic regularizers prevent catastrophic forgetting: {The} role of interpolation},
	booktitle = {Conference on {Lifelong} {Learning} {Agents}},
	publisher = {PMLR},
	author = {Lubana, Ekdeep Singh and Trivedi, Puja and Koutra, Danai and Dick, Robert},
	year = {2022},
	pages = {819--837},
}

@inproceedings{ji_gradient_2020,
	title = {Gradient descent follows the regularization path for general losses},
	booktitle = {Conference on {Learning} {Theory}},
	publisher = {PMLR},
	author = {Ji, Ziwei and Dudík, Miroslav and Schapire, Robert E and Telgarsky, Matus},
	year = {2020},
	pages = {2109--2136},
}

@article{parhi_near-minimax_2023,
	title = {Near-{Minimax} {Optimal} {Estimation} {With} {Shallow} {ReLU} {Neural} {Networks}},
	volume = {69},
	issn = {1557-9654},
	url = {https://ieeexplore.ieee.org/abstract/document/9899453},
	doi = {10.1109/TIT.2022.3208653},
	abstract = {We study the problem of estimating an unknown function from noisy data using shallow ReLU neural networks. The estimators we study minimize the sum of squared data-fitting errors plus a regularization term proportional to the squared Euclidean norm of the network weights. This minimization corresponds to the common approach of training a neural network with weight decay. We quantify the performance (mean-squared error) of these neural network estimators when the data-generating function belongs to the second-order Radon-domain bounded variation space. This space of functions was recently proposed as the natural function space associated with shallow ReLU neural networks. We derive a minimax lower bound for the estimation problem for this function space and show that the neural network estimators are minimax optimal up to logarithmic factors. This minimax rate is immune to the curse of dimensionality. We quantify an explicit gap between neural networks and linear methods (which include kernel methods) by deriving a linear minimax lower bound for the estimation problem, showing that linear methods necessarily suffer the curse of dimensionality in this function space. As a result, this paper sheds light on the phenomenon that neural networks seem to break the curse of dimensionality.},
	number = {2},
	urldate = {2024-10-30},
	journal = {IEEE Transactions on Information Theory},
	author = {Parhi, Rahul and Nowak, Robert D.},
	month = feb,
	year = {2023},
	note = {Conference Name: IEEE Transactions on Information Theory},
	keywords = {Biological neural networks, Estimation, Neural networks, Neurons, Noise measurement, Radon, TV, Training, function approximation, nonparametric function estimation, ridge functions, sparsity},
	pages = {1125--1140},
}

@inproceedings{collins_provable_2024,
	title = {Provable {Multi}-{Task} {Representation} {Learning} by {Two}-{Layer} {ReLU} {Neural} {Networks}},
	url = {https://openreview.net/forum?id=M8UbECx485},
	abstract = {An increasingly popular machine learning paradigm is to pretrain a neural network (NN) on many tasks offline, then adapt it to downstream tasks, often by re-training only the last linear layer of the network. This approach yields strong downstream performance in a variety of contexts, demonstrating that multitask pretraining leads to effective feature learning. Although several recent theoretical studies have shown that shallow NNs learn meaningful features when either (i) they are trained on a *single* task or (ii) they are *linear*, very little is known about the closer-to-practice case of *nonlinear* NNs trained on *multiple* tasks. In this work, we present the first results proving that feature learning occurs during training with a nonlinear model on multiple tasks. Our key insight is that multi-task pretraining induces a pseudo-contrastive loss that favors representations that align points that typically have the same label across tasks. Using this observation, we show that when the tasks are binary classification tasks with labels depending on the projection of the data onto an \$r\$-dimensional subspace within the \$d{\textbackslash}gg r\$-dimensional input space, a simple gradient-based multitask learning algorithm on a two-layer ReLU NN recovers this projection, allowing for generalization to downstream tasks with sample and neuron complexity independent of \$d\$. In contrast, we show that with high probability over the draw of a single task, training on this single task cannot guarantee to learn all \$r\$ ground-truth features.},
	language = {en},
	urldate = {2024-10-30},
	author = {Collins, Liam and Hassani, Hamed and Soltanolkotabi, Mahdi and Mokhtari, Aryan and Shakkottai, Sanjay},
	month = jun,
	year = {2024},
}

@inproceedings{yang_better_2022,
	title = {A {Better} {Way} to {Decay}: {Proximal} {Gradient} {Training} {Algorithms} for {Neural} {Nets}},
	shorttitle = {A {Better} {Way} to {Decay}},
	url = {https://openreview.net/forum?id=4y1xh8jClhC},
	abstract = {Weight decay is one of the most widely used forms of regularization in deep learning, and has been shown to improve generalization and robustness. The optimization objective driving weight decay is a sum of losses plus a term proportional to the sum of squared weights. This paper argues that stochastic gradient descent (SGD) may be an inefficient algorithm for this objective. For neural networks with ReLU activations, solutions to the weight decay objective are equivalent to those of a different objective in which the regularization term is instead a sum of products of \${\textbackslash}ell\_2\$ (not squared) norms of the input and output weights associated each ReLU. This alternative (and effectively equivalent) regularization suggests a novel proximal gradient algorithm for network training. Theory and experiments support the new training approach, showing that it can converge much faster to the sparse solutions it shares with standard weight decay training.},
	language = {en},
	urldate = {2024-10-30},
	author = {Yang, Liu and Zhang, Jifan and Shenouda, Joseph and Papailiopoulos, Dimitris and Lee, Kangwook and Nowak, Robert D.},
	month = nov,
	year = {2022},
}

@article{shenouda_variation_2024,
	title = {Variation {Spaces} for {Multi}-{Output} {Neural} {Networks}: {Insights} on {Multi}-{Task} {Learning} and {Network} {Compression}},
	volume = {25},
	issn = {1533-7928},
	shorttitle = {Variation {Spaces} for {Multi}-{Output} {Neural} {Networks}},
	url = {http://jmlr.org/papers/v25/23-0677.html},
	abstract = {This paper introduces a novel theoretical framework for the analysis of vector-valued neural networks through the development of vector-valued variation spaces, a new class of reproducing kernel Banach spaces. These spaces emerge from studying the regularization effect of weight decay in training networks with activation functions like the rectified linear unit (ReLU). This framework offers a deeper understanding of multi-output networks and their function-space characteristics. A key contribution of this work is the development of a representer theorem for the vector-valued variation spaces. This representer theorem establishes that shallow vector-valued neural networks are the solutions to data-fitting problems over these infinite-dimensional spaces, where the network widths are bounded by the square of the number of training data. This observation reveals that the norm associated with these vector-valued variation spaces encourages the learning of features that are useful for multiple tasks, shedding new light on multi-task learning with neural networks. Finally, this paper develops a connection between weight-decay regularization and the multi-task lasso problem. This connection leads to novel bounds for layer widths in deep networks that depend on the intrinsic dimensions of the training data representations. This insight not only deepens the understanding of the deep network architectural requirements, but also yields a simple convex optimization method for deep neural network compression. The performance of this compression procedure is evaluated on various architectures.},
	number = {231},
	urldate = {2024-10-30},
	journal = {Journal of Machine Learning Research},
	author = {Shenouda, Joseph and Parhi, Rahul and Lee, Kangwook and Nowak, Robert D.},
	year = {2024},
	pages = {1--40},
}

@inproceedings{lee_maslows_2022,
	title = {Maslow’s {Hammer} in {Catastrophic} {Forgetting}: {Node} {Re}-{Use} vs. {Node} {Activation}},
	shorttitle = {Maslow’s {Hammer} in {Catastrophic} {Forgetting}},
	url = {https://proceedings.mlr.press/v162/lee22g.html},
	abstract = {Continual learning—learning new tasks in sequence while maintaining performance on old tasks—remains particularly challenging for artificial neural networks. Surprisingly, the amount of forgetting does not increase with the dissimilarity between the learned tasks, but appears to be worst in an intermediate similarity regime. In this paper we theoretically analyse both a synthetic teacher-student framework and a real data setup to provide an explanation of this phenomenon that we name Maslow’s Hammer hypothesis. Our analysis reveals the presence of a trade-off between node activation and node re-use that results in worst forgetting in the intermediate regime. Using this understanding we reinterpret popular algorithmic interventions for catastrophic interference in terms of this trade-off, and identify the regimes in which they are most effective.},
	language = {en},
	urldate = {2024-10-29},
	booktitle = {Proceedings of the 39th {International} {Conference} on {Machine} {Learning}},
	publisher = {PMLR},
	author = {Lee, Sebastian and Mannelli, Stefano Sarao and Clopath, Claudia and Goldt, Sebastian and Saxe, Andrew},
	month = jun,
	year = {2022},
	note = {ISSN: 2640-3498},
	pages = {12455--12477},
}

@inproceedings{lee_continual_2021,
	title = {Continual {Learning} in the {Teacher}-{Student} {Setup}: {Impact} of {Task} {Similarity}},
	shorttitle = {Continual {Learning} in the {Teacher}-{Student} {Setup}},
	url = {https://proceedings.mlr.press/v139/lee21e.html},
	abstract = {Continual learning\{—\}the ability to learn many tasks in sequence\{—\}is critical for artificial learning systems. Yet standard training methods for deep networks often suffer from catastrophic forgetting, where learning new tasks erases knowledge of the earlier tasks. While catastrophic forgetting labels the problem, the theoretical reasons for interference between tasks remain unclear. Here, we attempt to narrow this gap between theory and practice by studying continual learning in the teacher-student setup. We extend previous analytical work on two-layer networks in the teacher-student setup to multiple teachers. Using each teacher to represent a different task, we investigate how the relationship between teachers affects the amount of forgetting and transfer exhibited by the student when the task switches. In line with recent work, we find that when tasks depend on similar features, intermediate task similarity leads to greatest forgetting. However, feature similarity is only one way in which tasks may be related. The teacher-student approach allows us to disentangle task similarity at the level of {\textbackslash}emph\{readouts\} (hidden-to-output weights) as well as {\textbackslash}emph\{features\} (input-to-hidden weights). We find a complex interplay between both types of similarity, initial transfer/forgetting rates, maximum transfer/forgetting, and the long-time (post-switch) amount of transfer/forgetting. Together, these results help illuminate the diverse factors contributing to catastrophic forgetting.},
	language = {en},
	urldate = {2024-10-29},
	booktitle = {Proceedings of the 38th {International} {Conference} on {Machine} {Learning}},
	publisher = {PMLR},
	author = {Lee, Sebastian and Goldt, Sebastian and Saxe, Andrew},
	month = jul,
	year = {2021},
	note = {ISSN: 2640-3498},
	pages = {6109--6119},
}

@inproceedings{lin_theory_2023,
	title = {Theory on {Forgetting} and {Generalization} of {Continual} {Learning}},
	url = {https://proceedings.mlr.press/v202/lin23f.html},
	abstract = {Continual learning (CL), which aims to learn a sequence of tasks, has attracted significant recent attention. However, most work has focused on the experimental performance of CL, and theoretical studies of CL are still limited. In particular, there is a lack of understanding on what factors are important and how they affect "catastrophic forgetting" and generalization performance. To fill this gap, our theoretical analysis, under overparameterized linear models, provides the first-known explicit form of the expected forgetting and generalization error for a general CL setup with an arbitrary number of tasks. Further analysis of such a key result yields a number of theoretical explanations about how overparameterization, task similarity, and task ordering affect both forgetting and generalization error of CL. More interestingly, by conducting experiments on real datasets using deep neural networks (DNNs), we show that some of these insights even go beyond the linear models and can be carried over to practical setups. In particular, we use concrete examples to show that our results not only explain some interesting empirical observations in recent studies, but also motivate better practical algorithm designs of CL.},
	language = {en},
	urldate = {2024-10-29},
	booktitle = {Proceedings of the 40th {International} {Conference} on {Machine} {Learning}},
	publisher = {PMLR},
	author = {Lin, Sen and Ju, Peizhong and Liang, Yingbin and Shroff, Ness},
	month = jul,
	year = {2023},
	note = {ISSN: 2640-3498},
	pages = {21078--21100},
}

@inproceedings{goldfarb_joint_2024,
	title = {The {Joint} {Effect} of {Task} {Similarity} and {Overparameterization} on {Catastrophic} {Forgetting}—{An} {Analytical} {Model}},
	booktitle = {The {Twelfth} {International} {Conference} on {Learning} {Representations}},
	author = {Goldfarb, Daniel and Evron, Itay and Weinberger, Nir and Soudry, Daniel and HAnd, PAul},
	year = {2024},
}

@inproceedings{evron_how_2022,
	title = {How catastrophic can catastrophic forgetting be in linear regression?},
	booktitle = {Conference on {Learning} {Theory}},
	publisher = {PMLR},
	author = {Evron, Itay and Moroshko, Edward and Ward, Rachel and Srebro, Nathan and Soudry, Daniel},
	year = {2022},
	pages = {4028--4079},
}

@inproceedings{evron_continual_2023,
	title = {Continual learning in linear classification on separable data},
	booktitle = {International {Conference} on {Machine} {Learning}},
	publisher = {PMLR},
	author = {Evron, Itay and Moroshko, Edward and Buzaglo, Gon and Khriesh, Maroun and Marjieh, Badea and Srebro, Nathan and Soudry, Daniel},
	year = {2023},
	pages = {9440--9484},
}

@inproceedings{deng_imagenet_2009,
	title = {{ImageNet}: {A} large-scale hierarchical image database},
	doi = {10.1109/CVPR.2009.5206848},
	booktitle = {2009 {IEEE} {Conference} on {Computer} {Vision} and {Pattern} {Recognition}},
	author = {Deng, Jia and Dong, Wei and Socher, Richard and Li, Li-Jia and Li, Kai and Fei-Fei, Li},
	year = {2009},
	keywords = {Explosions, Image databases, Image retrieval, Information retrieval, Internet, Large-scale systems, Multimedia databases, Ontologies, Robustness, Spine},
	pages = {248--255},
}

@article{krizhevsky_learning_2009,
	title = {Learning multiple layers of features from tiny images},
	author = {Krizhevsky, Alex and Hinton, Geoffrey and {others}},
	year = {2009},
	note = {Publisher: Toronto, ON, Canada},
}

@article{dosovitskiy_image_2020,
	title = {An image is worth 16x16 words: {Transformers} for image recognition at scale},
	journal = {arXiv preprint arXiv:2010.11929},
	author = {Dosovitskiy, Alexey and Beyer, Lucas and Kolesnikov, Alexander and Weissenborn, Dirk and Zhai, Xiaohua and Unterthiner, Thomas and Dehghani, Mostafa and Minderer, Matthias and Heigold, Georg and Gelly, Sylvain and {others}},
	year = {2020},
}

@inproceedings{he_deep_2016,
	title = {Deep {Residual} {Learning} for {Image} {Recognition}},
	url = {https://openaccess.thecvf.com/content_cvpr_2016/html/He_Deep_Residual_Learning_CVPR_2016_paper.html},
	urldate = {2024-05-16},
	author = {He, Kaiming and Zhang, Xiangyu and Ren, Shaoqing and Sun, Jian},
	year = {2016},
	pages = {770--778},
}

@article{paszke_pytorch_2019,
	title = {Pytorch: {An} imperative style, high-performance deep learning library},
	volume = {32},
	journal = {Advances in neural information processing systems},
	author = {Paszke, Adam and Gross, Sam and Massa, Francisco and Lerer, Adam and Bradbury, James and Chanan, Gregory and Killeen, Trevor and Lin, Zeming and Gimelshein, Natalia and Antiga, Luca and {others}},
	year = {2019},
}

@article{ongie_function_2019,
	title = {A function space view of bounded norm infinite width relu nets: {The} multivariate case},
	journal = {arXiv preprint arXiv:1910.01635},
	author = {Ongie, Greg and Willett, Rebecca and Soudry, Daniel and Srebro, Nathan},
	year = {2019},
}

@inproceedings{kornblith_similarity_2019,
	title = {Similarity of {Neural} {Network} {Representations} {Revisited}},
	url = {https://proceedings.mlr.press/v97/kornblith19a.html},
	abstract = {Recent work has sought to understand the behavior of neural networks by comparing representations between layers and between different trained models. We examine methods for comparing neural network representations based on canonical correlation analysis (CCA). We show that CCA belongs to a family of statistics for measuring multivariate similarity, but that neither CCA nor any other statistic that is invariant to invertible linear transformation can measure meaningful similarities between representations of higher dimension than the number of data points. We introduce a similarity index that measures the relationship between representational similarity matrices and does not suffer from this limitation. This similarity index is equivalent to centered kernel alignment (CKA) and is also closely connected to CCA. Unlike CCA, CKA can reliably identify correspondences between representations in networks trained from different initializations.},
	language = {en},
	urldate = {2023-11-16},
	booktitle = {Proceedings of the 36th {International} {Conference} on {Machine} {Learning}},
	publisher = {PMLR},
	author = {Kornblith, Simon and Norouzi, Mohammad and Lee, Honglak and Hinton, Geoffrey},
	month = may,
	year = {2019},
	pages = {3519--3529},
}

@misc{gao_theory_2017,
	title = {A theory of multineuronal dimensionality, dynamics and measurement},
	copyright = {© 2017, Posted by Cold Spring Harbor Laboratory. This pre-print is available under a Creative Commons License (Attribution-NonCommercial-NoDerivs 4.0 International), CC BY-NC-ND 4.0, as described at http://creativecommons.org/licenses/by-nc-nd/4.0/},
	url = {https://www.biorxiv.org/content/10.1101/214262v2},
	doi = {10.1101/214262},
	abstract = {In many experiments, neuroscientists tightly control behavior, record many trials, and obtain trial-averaged firing rates from hundreds of neurons in circuits containing billions of behaviorally relevant neurons. Di-mensionality reduction methods reveal a striking simplicity underlying such multi-neuronal data: they can be reduced to a low-dimensional space, and the resulting neural trajectories in this space yield a remarkably insightful dynamical portrait of circuit computation. This simplicity raises profound and timely conceptual questions. What are its origins and its implications for the complexity of neural dynamics? How would the situation change if we recorded more neurons? When, if at all, can we trust dynamical portraits obtained from measuring an infinitesimal fraction of task relevant neurons? We present a theory that answers these questions, and test it using physiological recordings from reaching monkeys. This theory reveals conceptual insights into how task complexity governs both neural dimensionality and accurate recovery of dynamic portraits, thereby providing quantitative guidelines for future large-scale experimental design.},
	language = {en},
	urldate = {2023-11-16},
	publisher = {bioRxiv},
	author = {Gao, Peiran and Trautmann, Eric and Yu, Byron and Santhanam, Gopal and Ryu, Stephen and Shenoy, Krishna and Ganguli, Surya},
	month = nov,
	year = {2017},
}

@misc{giaffar_effective_2023,
	title = {The effective number of shared dimensions: {A} simple method for revealing shared structure between datasets},
	copyright = {© 2023, Posted by Cold Spring Harbor Laboratory. This pre-print is available under a Creative Commons License (Attribution-NonCommercial 4.0 International), CC BY-NC 4.0, as described at http://creativecommons.org/licenses/by-nc/4.0/},
	shorttitle = {The effective number of shared dimensions},
	url = {https://www.biorxiv.org/content/10.1101/2023.07.27.550815v1},
	doi = {10.1101/2023.07.27.550815},
	abstract = {A number of recent studies have sought to understand the behavior of artificial and biological neural networks by comparing representations across layers, networks and brain areas. Simultaneously, there has been growing interest in using dimensionality of a dataset as a proxy for computational complexity. At the intersection of these topics, studies exploring the dimensionality of shared computational and representational subspaces have relied on model-based methods, but a standard, model-free measure is lacking. Here we present a candidate measure for shared dimensionality that we call the effective number of shared dimensions (ENSD). The ENSD can be applied to data matrices sharing at least one dimension, reduces to the well-known participation ratio when both data sets are equivalent and has a number of other robust and intuitive mathematical properties. Notably, the ENSD can be written as a similarity metric that is a re-scaled version of centered kernel alignment (CKA) but additionally describes the dimensionality of the aligned subspaces. Unlike methods like canonical correlation analysis (CCA), the ENSD is robust to cases where data is sparse or low rank. We demonstrate its utility and computational efficiency by a direct comparison of CKA and ENSD on across-layer similarities in convolutional neural networks as well as by recovering results from recent studies in neuroscience on communication subspaces between brain regions. Finally, we demonstrate how the ENSD and its constituent statistics allow us to perform a variety of multi-modal analyses of multivariate datasets. Specifically, we use connectomic data to probe the alignment of parallel pathways in the fly olfactory system, revealing novel results in the interaction between innate and learned olfactory representations. Altogether, we show that the ENSD is an interpretable and computationally efficient model-free measure of shared dimensionality and that it can be used to probe shared structure in a wide variety of data types.},
	language = {en},
	urldate = {2023-11-16},
	publisher = {bioRxiv},
	author = {Giaffar, Hamza and Buxó, Camille Rullán and Aoi, Mikio},
	month = jul,
	year = {2023},
}

@inproceedings{dai_representation_2021,
	title = {Representation {Costs} of {Linear} {Neural} {Networks}: {Analysis} and {Design}},
	volume = {34},
	shorttitle = {Representation {Costs} of {Linear} {Neural} {Networks}},
	url = {https://proceedings.neurips.cc/paper_files/paper/2021/hash/e22cb9d6bbb4c290a94e4fff4d68a831-Abstract.html},
	urldate = {2023-11-13},
	booktitle = {Advances in {Neural} {Information} {Processing} {Systems}},
	publisher = {Curran Associates, Inc.},
	author = {Dai, Zhen and Karzand, Mina and Srebro, Nathan},
	year = {2021},
	pages = {26884--26896},
}

@inproceedings{azulay_implicit_2021,
	title = {On the {Implicit} {Bias} of {Initialization} {Shape}: {Beyond} {Infinitesimal} {Mirror} {Descent}},
	shorttitle = {On the {Implicit} {Bias} of {Initialization} {Shape}},
	url = {https://proceedings.mlr.press/v139/azulay21a.html},
	abstract = {Recent work has highlighted the role of initialization scale in determining the structure of the solutions that gradient methods converge to. In particular, it was shown that large initialization leads to the neural tangent kernel regime solution, whereas small initialization leads to so called “rich regimes”. However, the initialization structure is richer than the overall scale alone and involves relative magnitudes of different weights and layers in the network. Here we show that these relative scales, which we refer to as initialization shape, play an important role in determining the learned model. We develop a novel technique for deriving the inductive bias of gradient-flow and use it to obtain closed-form implicit regularizers for multiple cases of interest.},
	language = {en},
	urldate = {2023-09-28},
	booktitle = {Proceedings of the 38th {International} {Conference} on {Machine} {Learning}},
	publisher = {PMLR},
	author = {Azulay, Shahar and Moroshko, Edward and Nacson, Mor Shpigel and Woodworth, Blake E. and Srebro, Nathan and Globerson, Amir and Soudry, Daniel},
	month = jul,
	year = {2021},
	pages = {468--477},
}

@inproceedings{kornblith_better_2019,
	title = {Do better {ImageNet} models transfer better?},
	url = {https://arxiv.org/pdf/1805.08974.pdf},
	urldate = {2023-09-28},
	author = {Kornblith, Simon and Shlens, Jon and Le, Quoc V.},
	year = {2019},
}

@inproceedings{boursier_gradient_2022,
	title = {Gradient flow dynamics of shallow {ReLU} networks for square loss and orthogonal inputs},
	url = {https://openreview.net/forum?id=L74c-iUxQ1I},
	abstract = {The training of neural networks by gradient descent methods is a cornerstone of the deep learning revolution. Yet, despite some recent progress, a complete theory explaining its success is still missing. This article presents, for orthogonal input vectors, a precise description of the gradient flow dynamics of training one-hidden layer ReLU neural networks for the mean squared error at small initialisation. In this setting, despite non-convexity, we show that the gradient flow converges to zero loss and characterise its implicit bias towards minimum variation norm. Furthermore, some interesting phenomena are highlighted: a quantitative description of the initial alignment phenomenon and a proof that the process follows a specific saddle to saddle dynamics.},
	language = {en},
	urldate = {2023-09-28},
	author = {Boursier, Etienne and Pillaud-Vivien, Loucas and Flammarion, Nicolas},
	month = may,
	year = {2022},
}

@misc{huh_low-rank_2023,
	title = {The {Low}-{Rank} {Simplicity} {Bias} in {Deep} {Networks}},
	url = {http://arxiv.org/abs/2103.10427},
	doi = {10.48550/arXiv.2103.10427},
	abstract = {Modern deep neural networks are highly over-parameterized compared to the data on which they are trained, yet they often generalize remarkably well. A flurry of recent work has asked: why do deep networks not overfit to their training data? In this work, we make a series of empirical observations that investigate and extend the hypothesis that deeper networks are inductively biased to find solutions with lower effective rank embeddings. We conjecture that this bias exists because the volume of functions that maps to low effective rank embedding increases with depth. We show empirically that our claim holds true on finite width linear and non-linear models on practical learning paradigms and show that on natural data, these are often the solutions that generalize well. We then show that the simplicity bias exists at both initialization and after training and is resilient to hyper-parameters and learning methods. We further demonstrate how linear over-parameterization of deep non-linear models can be used to induce low-rank bias, improving generalization performance on CIFAR and ImageNet without changing the modeling capacity.},
	urldate = {2023-09-28},
	publisher = {arXiv},
	author = {Huh, Minyoung and Mobahi, Hossein and Zhang, Richard and Cheung, Brian and Agrawal, Pulkit and Isola, Phillip},
	month = mar,
	year = {2023},
	note = {arXiv:2103.10427 [cs]},
	keywords = {Computer Science - Computer Vision and Pattern Recognition, Computer Science - Machine Learning},
}

@inproceedings{haochen_shape_2021,
	title = {Shape {Matters}: {Understanding} the {Implicit} {Bias} of the {Noise} {Covariance}},
	shorttitle = {Shape {Matters}},
	url = {https://proceedings.mlr.press/v134/haochen21a.html},
	abstract = {The noise in stochastic gradient descent (SGD) provides a crucial implicit regularization effect for training overparameterized models. Prior theoretical work largely focuses on spherical Gaussian noise, whereas empirical studies demonstrate the phenomenon that parameter-dependent noise — induced by mini-batches or label perturbation — is far more effective than Gaussian noise.  This paper theoretically characterizes this phenomenon on a quadratically-parameterized model introduced by Vaskevicius et al. and Woodworth et al.  We show that in an over-parameterized setting, SGD with label noise recovers the sparse ground-truth with an arbitrary initialization, whereas SGD with Gaussian noise or gradient descent overfits to dense solutions with large norms. Our analysis reveals that parameter-dependent noise introduces a bias towards local minima with smaller noise variance, whereas spherical Gaussian noise does not.},
	language = {en},
	urldate = {2023-09-28},
	booktitle = {Proceedings of {Thirty} {Fourth} {Conference} on {Learning} {Theory}},
	publisher = {PMLR},
	author = {HaoChen, Jeff Z. and Wei, Colin and Lee, Jason and Ma, Tengyu},
	month = jul,
	year = {2021},
	pages = {2315--2357},
}

@article{savarese_how_2019,
	title = {How do infinite width bounded norm networks look in function space?: 32nd {Conference} on {Learning} {Theory}, {COLT} 2019},
	volume = {99},
	shorttitle = {How do infinite width bounded norm networks look in function space?},
	url = {http://www.scopus.com/inward/record.url?scp=85132757852&partnerID=8YFLogxK},
	abstract = {We consider the question of what functions can be captured by ReLU networks with an unbounded number of units (infinite width), but where the overall network Euclidean norm (sum of squares of all weights in the system, except for an unregularized bias term for each unit) is bounded; or equivalently what is the minimal norm required to approximate a given function. For functions f : R → R and a single hidden layer, we show that the minimal network norm for representing f is max(´ {\textbar}f00(x){\textbar} dx, {\textbar}f0(-∞) + f0(+∞){\textbar}), and hence the minimal norm fit for a sample is given by a linear spline interpolation.},
	urldate = {2023-09-28},
	journal = {Proceedings of Machine Learning Research},
	author = {Savarese, Pedro and Evron, Itay and Soudry, Daniel and Srebro, Nathan},
	year = {2019},
	pages = {2667--2690},
}

@inproceedings{pesme_implicit_2021,
	title = {Implicit {Bias} of {SGD} for {Diagonal} {Linear} {Networks}: a {Provable} {Benefit} of {Stochasticity}},
	shorttitle = {Implicit {Bias} of {SGD} for {Diagonal} {Linear} {Networks}},
	url = {https://openreview.net/forum?id=vvi7KqHQiA},
	abstract = {Understanding the implicit bias of training algorithms is of crucial importance in order to explain the success of overparametrised neural networks. In this paper, we study the dynamics of stochastic gradient descent over diagonal linear networks through its continuous time version, namely stochastic gradient flow. We explicitly characterise the solution chosen by the stochastic flow and prove that it always enjoys better generalisation properties than that of gradient flow.Quite surprisingly, we show that the convergence speed of the training loss controls the magnitude of the biasing effect: the slower the convergence, the better the bias. To fully complete our analysis, we provide convergence guarantees for the dynamics. We also give experimental results which support our theoretical claims. Our findings highlight the fact that structured noise can induce better generalisation and they help explain the greater performances of stochastic gradient descent over gradient descent observed in practice.},
	language = {en},
	urldate = {2023-09-28},
	author = {Pesme, Scott and Pillaud-Vivien, Loucas and Flammarion, Nicolas},
	month = nov,
	year = {2021},
}

@misc{jaderberg_reinforcement_2016,
	title = {Reinforcement {Learning} with {Unsupervised} {Auxiliary} {Tasks}},
	url = {http://arxiv.org/abs/1611.05397},
	doi = {10.48550/arXiv.1611.05397},
	abstract = {Deep reinforcement learning agents have achieved state-of-the-art results by directly maximising cumulative reward. However, environments contain a much wider variety of possible training signals. In this paper, we introduce an agent that also maximises many other pseudo-reward functions simultaneously by reinforcement learning. All of these tasks share a common representation that, like unsupervised learning, continues to develop in the absence of extrinsic rewards. We also introduce a novel mechanism for focusing this representation upon extrinsic rewards, so that learning can rapidly adapt to the most relevant aspects of the actual task. Our agent significantly outperforms the previous state-of-the-art on Atari, averaging 880{\textbackslash}\% expert human performance, and a challenging suite of first-person, three-dimensional {\textbackslash}emph\{Labyrinth\} tasks leading to a mean speedup in learning of 10\${\textbackslash}times\$ and averaging 87{\textbackslash}\% expert human performance on Labyrinth.},
	urldate = {2023-09-28},
	publisher = {arXiv},
	author = {Jaderberg, Max and Mnih, Volodymyr and Czarnecki, Wojciech Marian and Schaul, Tom and Leibo, Joel Z. and Silver, David and Kavukcuoglu, Koray},
	month = nov,
	year = {2016},
	note = {arXiv:1611.05397 [cs]},
	keywords = {Computer Science - Machine Learning, Computer Science - Neural and Evolutionary Computing},
}

@inproceedings{weller_when_2022,
	address = {Dublin, Ireland},
	title = {When to {Use} {Multi}-{Task} {Learning} vs {Intermediate} {Fine}-{Tuning} for {Pre}-{Trained} {Encoder} {Transfer} {Learning}},
	url = {https://aclanthology.org/2022.acl-short.30},
	doi = {10.18653/v1/2022.acl-short.30},
	abstract = {Transfer learning (TL) in natural language processing (NLP) has seen a surge of interest in recent years, as pre-trained models have shown an impressive ability to transfer to novel tasks. Three main strategies have emerged for making use of multiple supervised datasets during fine-tuning: training on an intermediate task before training on the target task (STILTs), using multi-task learning (MTL) to train jointly on a supplementary task and the target task (pairwise MTL), or simply using MTL to train jointly on all available datasets (MTL-ALL). In this work, we compare all three TL methods in a comprehensive analysis on the GLUE dataset suite. We find that there is a simple heuristic for when to use one of these techniques over the other: pairwise MTL is better than STILTs when the target task has fewer instances than the supporting task and vice versa. We show that this holds true in more than 92\% of applicable cases on the GLUE dataset and validate this hypothesis with experiments varying dataset size. The simplicity and effectiveness of this heuristic is surprising and warrants additional exploration by the TL community. Furthermore, we find that MTL-ALL is worse than the pairwise methods in almost every case. We hope this study will aid others as they choose between TL methods for NLP tasks.},
	urldate = {2023-09-28},
	booktitle = {Proceedings of the 60th {Annual} {Meeting} of the {Association} for {Computational} {Linguistics} ({Volume} 2: {Short} {Papers})},
	publisher = {Association for Computational Linguistics},
	author = {Weller, Orion and Seppi, Kevin and Gardner, Matt},
	month = may,
	year = {2022},
	pages = {272--282},
}

@inproceedings{dery_should_2021,
	title = {Should {We} {Be} {Pre}-training? {An} {Argument} for {End}-task {Aware} {Training} as an {Alternative}},
	shorttitle = {Should {We} {Be} {Pre}-training?},
	url = {https://openreview.net/forum?id=2bO2x8NAIMB},
	abstract = {In most settings of practical concern, machine learning practitioners know in advance what end-task they wish to boost with auxiliary tasks. However, widely used methods for leveraging auxiliary data like pre-training and its continued-pretraining variant are end-task agnostic: they rarely, if ever, exploit knowledge of the target task. We study replacing end-task agnostic continued training of pre-trained language models with end-task aware training of said models. We argue that for sufficiently important end-tasks, the benefits of leveraging auxiliary data in a task-aware fashion can justify forgoing the traditional approach of obtaining generic, end-task agnostic representations as with (continued) pre-training. On three different low-resource NLP tasks from two domains, we demonstrate that multi-tasking the end-task and auxiliary objectives results in significantly better downstream task performance than the widely-used task-agnostic continued pre-training paradigm of Gururangan et al. (2020). We next introduce an online meta-learning algorithm that learns a set of multi-task weights to better balance among our multiple auxiliary objectives, achieving further improvements on end-task performance and data efficiency.},
	language = {en},
	urldate = {2023-09-28},
	author = {Dery, Lucio M. and Michel, Paul and Talwalkar, Ameet and Neubig, Graham},
	month = oct,
	year = {2021},
}

@misc{wu_understanding_2020,
	title = {Understanding and {Improving} {Information} {Transfer} in {Multi}-{Task} {Learning}},
	url = {http://arxiv.org/abs/2005.00944},
	doi = {10.48550/arXiv.2005.00944},
	abstract = {We investigate multi-task learning approaches that use a shared feature representation for all tasks. To better understand the transfer of task information, we study an architecture with a shared module for all tasks and a separate output module for each task. We study the theory of this setting on linear and ReLU-activated models. Our key observation is that whether or not tasks' data are well-aligned can significantly affect the performance of multi-task learning. We show that misalignment between task data can cause negative transfer (or hurt performance) and provide sufficient conditions for positive transfer. Inspired by the theoretical insights, we show that aligning tasks' embedding layers leads to performance gains for multi-task training and transfer learning on the GLUE benchmark and sentiment analysis tasks; for example, we obtain a 2.35\% GLUE score average improvement on 5 GLUE tasks over BERT-LARGE using our alignment method. We also design an SVD-based task reweighting scheme and show that it improves the robustness of multi-task training on a multi-label image dataset.},
	urldate = {2023-09-28},
	publisher = {arXiv},
	author = {Wu, Sen and Zhang, Hongyang R. and Ré, Christopher},
	month = may,
	year = {2020},
	note = {arXiv:2005.00944 [cs]},
	keywords = {Computer Science - Computation and Language, Computer Science - Machine Learning},
}

@article{maurer_benefit_2016,
	title = {The {Benefit} of {Multitask} {Representation} {Learning}},
	volume = {17},
	issn = {1533-7928},
	url = {http://jmlr.org/papers/v17/15-242.html},
	abstract = {We discuss a general method to learn data representations from multiple tasks. We provide a justification for this method in both settings of multitask learning and learning-to-learn. The method is illustrated in detail in the special case of linear feature learning. Conditions on the theoretical advantage offered by multitask representation learning over independent task learning are established. In particular, focusing on the important example of half-space learning, we derive the regime in which multitask representation learning is beneficial over independent task learning, as a function of the sample size, the number of tasks and the intrinsic data dimensionality. Other potential applications of our results include multitask feature learning in reproducing kernel Hilbert spaces and multilayer, deep networks.},
	number = {81},
	urldate = {2023-09-28},
	journal = {Journal of Machine Learning Research},
	author = {Maurer, Andreas and Pontil, Massimiliano and Romera-Paredes, Bernardino},
	year = {2016},
	pages = {1--32},
}

@inproceedings{steed_upstream_2022,
	address = {Dublin, Ireland},
	title = {Upstream {Mitigation} {Is} \textit{ {N}}ot {All} {You} {Need}: {Testing} the {Bias} {Transfer} {Hypothesis} in {Pre}-{Trained} {Language} {Models}},
	shorttitle = {Upstream {Mitigation} {Is} \textit{ {N}}ot {All} {You} {Need}},
	url = {https://aclanthology.org/2022.acl-long.247},
	doi = {10.18653/v1/2022.acl-long.247},
	abstract = {A few large, homogenous, pre-trained models undergird many machine learning systems — and often, these models contain harmful stereotypes learned from the internet. We investigate the bias transfer hypothesis: the theory that social biases (such as stereotypes) internalized by large language models during pre-training transfer into harmful task-specific behavior after fine-tuning. For two classification tasks, we find that reducing intrinsic bias with controlled interventions before fine-tuning does little to mitigate the classifier's discriminatory behavior after fine-tuning. Regression analysis suggests that downstream disparities are better explained by biases in the fine-tuning dataset. Still, pre-training plays a role: simple alterations to co-occurrence rates in the fine-tuning dataset are ineffective when the model has been pre-trained. Our results encourage practitioners to focus more on dataset quality and context-specific harms.},
	urldate = {2023-09-28},
	booktitle = {Proceedings of the 60th {Annual} {Meeting} of the {Association} for {Computational} {Linguistics} ({Volume} 1: {Long} {Papers})},
	publisher = {Association for Computational Linguistics},
	author = {Steed, Ryan and Panda, Swetasudha and Kobren, Ari and Wick, Michael},
	month = may,
	year = {2022},
	pages = {3524--3542},
}

@misc{wang_overwriting_2023,
	title = {Overwriting {Pretrained} {Bias} with {Finetuning} {Data}},
	url = {http://arxiv.org/abs/2303.06167},
	doi = {10.48550/arXiv.2303.06167},
	abstract = {Transfer learning is beneficial by allowing the expressive features of models pretrained on large-scale datasets to be finetuned for the target task of smaller, more domain-specific datasets. However, there is a concern that these pretrained models may come with their own biases which would propagate into the finetuned model. In this work, we investigate bias when conceptualized as both spurious correlations between the target task and a sensitive attribute as well as underrepresentation of a particular group in the dataset. Under both notions of bias, we find that (1) models finetuned on top of pretrained models can indeed inherit their biases, but (2) this bias can be corrected for through relatively minor interventions to the finetuning dataset, and often with a negligible impact to performance. Our findings imply that careful curation of the finetuning dataset is important for reducing biases on a downstream task, and doing so can even compensate for bias in the pretrained model.},
	urldate = {2023-09-28},
	publisher = {arXiv},
	author = {Wang, Angelina and Russakovsky, Olga},
	month = aug,
	year = {2023},
	note = {arXiv:2303.06167 [cs]},
	keywords = {Computer Science - Computer Vision and Pattern Recognition, Computer Science - Computers and Society, Computer Science - Machine Learning},
}

@misc{bommasani_opportunities_2022,
	title = {On the {Opportunities} and {Risks} of {Foundation} {Models}},
	url = {http://arxiv.org/abs/2108.07258},
	doi = {10.48550/arXiv.2108.07258},
	abstract = {AI is undergoing a paradigm shift with the rise of models (e.g., BERT, DALL-E, GPT-3) that are trained on broad data at scale and are adaptable to a wide range of downstream tasks. We call these models foundation models to underscore their critically central yet incomplete character. This report provides a thorough account of the opportunities and risks of foundation models, ranging from their capabilities (e.g., language, vision, robotics, reasoning, human interaction) and technical principles(e.g., model architectures, training procedures, data, systems, security, evaluation, theory) to their applications (e.g., law, healthcare, education) and societal impact (e.g., inequity, misuse, economic and environmental impact, legal and ethical considerations). Though foundation models are based on standard deep learning and transfer learning, their scale results in new emergent capabilities,and their effectiveness across so many tasks incentivizes homogenization. Homogenization provides powerful leverage but demands caution, as the defects of the foundation model are inherited by all the adapted models downstream. Despite the impending widespread deployment of foundation models, we currently lack a clear understanding of how they work, when they fail, and what they are even capable of due to their emergent properties. To tackle these questions, we believe much of the critical research on foundation models will require deep interdisciplinary collaboration commensurate with their fundamentally sociotechnical nature.},
	urldate = {2023-09-28},
	publisher = {arXiv},
	author = {Bommasani, Rishi and Hudson, Drew A. and Adeli, Ehsan and Altman, Russ and Arora, Simran and von Arx, Sydney and Bernstein, Michael S. and Bohg, Jeannette and Bosselut, Antoine and Brunskill, Emma and Brynjolfsson, Erik and Buch, Shyamal and Card, Dallas and Castellon, Rodrigo and Chatterji, Niladri and Chen, Annie and Creel, Kathleen and Davis, Jared Quincy and Demszky, Dora and Donahue, Chris and Doumbouya, Moussa and Durmus, Esin and Ermon, Stefano and Etchemendy, John and Ethayarajh, Kawin and Fei-Fei, Li and Finn, Chelsea and Gale, Trevor and Gillespie, Lauren and Goel, Karan and Goodman, Noah and Grossman, Shelby and Guha, Neel and Hashimoto, Tatsunori and Henderson, Peter and Hewitt, John and Ho, Daniel E. and Hong, Jenny and Hsu, Kyle and Huang, Jing and Icard, Thomas and Jain, Saahil and Jurafsky, Dan and Kalluri, Pratyusha and Karamcheti, Siddharth and Keeling, Geoff and Khani, Fereshte and Khattab, Omar and Koh, Pang Wei and Krass, Mark and Krishna, Ranjay and Kuditipudi, Rohith and Kumar, Ananya and Ladhak, Faisal and Lee, Mina and Lee, Tony and Leskovec, Jure and Levent, Isabelle and Li, Xiang Lisa and Li, Xuechen and Ma, Tengyu and Malik, Ali and Manning, Christopher D. and Mirchandani, Suvir and Mitchell, Eric and Munyikwa, Zanele and Nair, Suraj and Narayan, Avanika and Narayanan, Deepak and Newman, Ben and Nie, Allen and Niebles, Juan Carlos and Nilforoshan, Hamed and Nyarko, Julian and Ogut, Giray and Orr, Laurel and Papadimitriou, Isabel and Park, Joon Sung and Piech, Chris and Portelance, Eva and Potts, Christopher and Raghunathan, Aditi and Reich, Rob and Ren, Hongyu and Rong, Frieda and Roohani, Yusuf and Ruiz, Camilo and Ryan, Jack and Ré, Christopher and Sadigh, Dorsa and Sagawa, Shiori and Santhanam, Keshav and Shih, Andy and Srinivasan, Krishnan and Tamkin, Alex and Taori, Rohan and Thomas, Armin W. and Tramèr, Florian and Wang, Rose E. and Wang, William and Wu, Bohan and Wu, Jiajun and Wu, Yuhuai and Xie, Sang Michael and Yasunaga, Michihiro and You, Jiaxuan and Zaharia, Matei and Zhang, Michael and Zhang, Tianyi and Zhang, Xikun and Zhang, Yuhui and Zheng, Lucia and Zhou, Kaitlyn and Liang, Percy},
	month = jul,
	year = {2022},
	note = {arXiv:2108.07258 [cs]},
	keywords = {Computer Science - Artificial Intelligence, Computer Science - Computers and Society, Computer Science - Machine Learning},
}

@inproceedings{shachaf_theoretical_2021,
	title = {A {Theoretical} {Analysis} of {Fine}-tuning with {Linear} {Teachers}},
	volume = {34},
	url = {https://proceedings.neurips.cc/paper/2021/hash/82039d16dce0aab3913b6a7ac73deff7-Abstract.html},
	abstract = {Fine-tuning is a common practice in deep learning, achieving excellent generalization results on downstream tasks using relatively little training data. Although widely used in practice, it is not well understood theoretically. Here we analyze the sample complexity of this scheme for regression with linear teachers in several settings. Intuitively, the success of fine-tuning depends on the similarity between the source tasks and the target task. But what is the right way of measuring this similarity? We show that the relevant measure has to do with the relation between the source task, the target task and the covariance structure of the target data. In the setting of linear regression, we show that under realistic settings there can be substantial sample complexity reduction when the above measure is low. For deep linear regression, we propose a novel result regarding the inductive bias of gradient-based training when the network is initialized with pretrained weights. Using this result we show that the similarity measure for this setting is also affected by the depth of the network. We conclude with results on shallow ReLU models, and analyze the dependence of sample complexity there on source and target tasks. We empirically demonstrate our results for both synthetic and realistic data.},
	urldate = {2023-09-28},
	booktitle = {Advances in {Neural} {Information} {Processing} {Systems}},
	publisher = {Curran Associates, Inc.},
	author = {Shachaf, Gal and Brutzkus, Alon and Globerson, Amir},
	year = {2021},
	pages = {15382--15394},
}

@inproceedings{moroshko_implicit_2020,
	title = {Implicit {Bias} in {Deep} {Linear} {Classification}: {Initialization} {Scale} vs {Training} {Accuracy}},
	volume = {33},
	shorttitle = {Implicit {Bias} in {Deep} {Linear} {Classification}},
	url = {https://proceedings.neurips.cc/paper/2020/hash/fc2022c89b61c76bbef978f1370660bf-Abstract.html},
	abstract = {We provide a detailed asymptotic study of gradient flow trajectories and their implicit optimization bias when minimizing the exponential loss over "diagonal linear networks". This is the simplest model displaying a transition between "kernel" and non-kernel ("rich" or "active") regimes.  We show how the transition is controlled by the relationship between the initialization scale and how accurately we minimize the training loss.  Our results indicate that some limit behavior of gradient descent only kick in at ridiculous training accuracies (well beyond 10{\textasciicircum}-100). Moreover, the implicit bias at reasonable initialization scales and training accuracies is more complex and not captured by these limits.},
	urldate = {2023-09-28},
	booktitle = {Advances in {Neural} {Information} {Processing} {Systems}},
	publisher = {Curran Associates, Inc.},
	author = {Moroshko, Edward and Woodworth, Blake E and Gunasekar, Suriya and Lee, Jason D and Srebro, Nati and Soudry, Daniel},
	year = {2020},
	pages = {22182--22193},
}

@misc{neyshabur_implicit_2017,
	title = {Implicit {Regularization} in {Deep} {Learning}},
	url = {http://arxiv.org/abs/1709.01953},
	doi = {10.48550/arXiv.1709.01953},
	abstract = {In an attempt to better understand generalization in deep learning, we study several possible explanations. We show that implicit regularization induced by the optimization method is playing a key role in generalization and success of deep learning models. Motivated by this view, we study how different complexity measures can ensure generalization and explain how optimization algorithms can implicitly regularize complexity measures. We empirically investigate the ability of these measures to explain different observed phenomena in deep learning. We further study the invariances in neural networks, suggest complexity measures and optimization algorithms that have similar invariances to those in neural networks and evaluate them on a number of learning tasks.},
	urldate = {2023-09-28},
	publisher = {arXiv},
	author = {Neyshabur, Behnam},
	month = sep,
	year = {2017},
	note = {arXiv:1709.01953 [cs]},
	keywords = {Computer Science - Machine Learning},
}

@inproceedings{mulayoff_implicit_2021,
	title = {The {Implicit} {Bias} of {Minima} {Stability}: {A} {View} from {Function} {Space}},
	volume = {34},
	shorttitle = {The {Implicit} {Bias} of {Minima} {Stability}},
	url = {https://proceedings.neurips.cc/paper/2021/hash/944a5ae3483ed5c1e10bbccb7942a279-Abstract.html},
	urldate = {2023-09-28},
	booktitle = {Advances in {Neural} {Information} {Processing} {Systems}},
	publisher = {Curran Associates, Inc.},
	author = {Mulayoff, Rotem and Michaeli, Tomer and Soudry, Daniel},
	year = {2021},
	pages = {17749--17761},
}

@misc{refinetti_neural_2023,
	title = {Neural networks trained with {SGD} learn distributions of increasing complexity},
	url = {http://arxiv.org/abs/2211.11567},
	doi = {10.48550/arXiv.2211.11567},
	abstract = {The ability of deep neural networks to generalise well even when they interpolate their training data has been explained using various "simplicity biases". These theories postulate that neural networks avoid overfitting by first learning simple functions, say a linear classifier, before learning more complex, non-linear functions. Meanwhile, data structure is also recognised as a key ingredient for good generalisation, yet its role in simplicity biases is not yet understood. Here, we show that neural networks trained using stochastic gradient descent initially classify their inputs using lower-order input statistics, like mean and covariance, and exploit higher-order statistics only later during training. We first demonstrate this distributional simplicity bias (DSB) in a solvable model of a neural network trained on synthetic data. We empirically demonstrate DSB in a range of deep convolutional networks and visual transformers trained on CIFAR10, and show that it even holds in networks pre-trained on ImageNet. We discuss the relation of DSB to other simplicity biases and consider its implications for the principle of Gaussian universality in learning.},
	urldate = {2023-09-28},
	publisher = {arXiv},
	author = {Refinetti, Maria and Ingrosso, Alessandro and Goldt, Sebastian},
	month = may,
	year = {2023},
	note = {arXiv:2211.11567 [cond-mat, stat]},
	keywords = {Computer Science - Machine Learning, Condensed Matter - Disordered Systems and Neural Networks, Condensed Matter - Statistical Mechanics, Statistics - Machine Learning},
}

@article{rahaman_spectral_2018,
	title = {On the {Spectral} {Bias} of {Neural} {Networks}},
	url = {https://openreview.net/forum?id=r1gR2sC9FX},
	abstract = {Neural networks are known to be a class of highly expressive functions able to fit even random input-output mappings with 100\% accuracy. In this work we present properties of neural networks that complement this aspect of expressivity. By using tools from Fourier analysis, we show that deep ReLU networks are biased towards low frequency functions, meaning that they cannot have local fluctuations without affecting their global behavior. Intuitively, this property is in line with the observation that over-parameterized networks find simple patterns that generalize across data samples. We also investigate how the shape of the data manifold affects expressivity by showing evidence that learning high frequencies gets easier with increasing manifold complexity, and present a theoretical understanding of this behavior. Finally, we study the robustness of the frequency components with respect to parameter perturbation, to develop the intuition that the parameters must be finely tuned to express high frequency functions.},
	language = {en},
	urldate = {2023-09-28},
	author = {Rahaman, Nasim and Baratin, Aristide and Arpit, Devansh and Draxler, Felix and Lin, Min and Hamprecht, Fred and Bengio, Yoshua and Courville, Aaron},
	month = sep,
	year = {2018},
}

@article{braun_exact_2022,
	title = {Exact learning dynamics of deep linear networks with prior knowledge},
	volume = {35},
	url = {https://proceedings.neurips.cc/paper_files/paper/2022/hash/2b3bb2c95195130977a51b3bb251c40a-Abstract-Conference.html},
	language = {en},
	urldate = {2023-05-30},
	journal = {Advances in Neural Information Processing Systems},
	author = {Braun, Lukas and Dominé, Clémentine and Fitzgerald, James and Saxe, Andrew},
	month = dec,
	year = {2022},
	pages = {6615--6629},
}

@inproceedings{razin_implicit_2020,
	title = {Implicit {Regularization} in {Deep} {Learning} {May} {Not} {Be} {Explainable} by {Norms}},
	volume = {33},
	url = {https://proceedings.neurips.cc/paper/2020/hash/f21e255f89e0f258accbe4e984eef486-Abstract.html},
	abstract = {Mathematically characterizing the implicit regularization induced by gradient-based optimization is a longstanding pursuit in the theory of deep learning. A widespread hope is that a characterization based on minimization of norms may apply, and a standard test-bed for studying this prospect is matrix factorization (matrix completion via linear neural networks). It is an open question whether norms can explain the implicit regularization in matrix factorization. The current paper resolves this open question in the negative, by proving that there exist natural matrix factorization problems on which the implicit regularization drives all norms (and quasi-norms) towards infinity. Our results suggest that, rather than perceiving the implicit regularization via norms, a potentially more useful interpretation is minimization of rank. We demonstrate empirically that this interpretation extends to a certain class of non-linear neural networks, and hypothesize that it may be key to explaining generalization in deep learning.},
	urldate = {2023-05-29},
	booktitle = {Advances in {Neural} {Information} {Processing} {Systems}},
	publisher = {Curran Associates, Inc.},
	author = {Razin, Noam and Cohen, Nadav},
	year = {2020},
	pages = {21174--21187},
}

@misc{kumar_fine-tuning_2022,
	title = {Fine-{Tuning} can {Distort} {Pretrained} {Features} and {Underperform} {Out}-of-{Distribution}},
	url = {http://arxiv.org/abs/2202.10054},
	doi = {10.48550/arXiv.2202.10054},
	abstract = {When transferring a pretrained model to a downstream task, two popular methods are full fine-tuning (updating all the model parameters) and linear probing (updating only the last linear layer -- the "head"). It is well known that fine-tuning leads to better accuracy in-distribution (ID). However, in this paper, we find that fine-tuning can achieve worse accuracy than linear probing out-of-distribution (OOD) when the pretrained features are good and the distribution shift is large. On 10 distribution shift datasets (Breeds-Living17, Breeds-Entity30, DomainNet, CIFAR \${\textbackslash}to\$ STL, CIFAR10.1, FMoW, ImageNetV2, ImageNet-R, ImageNet-A, ImageNet-Sketch), fine-tuning obtains on average 2\% higher accuracy ID but 7\% lower accuracy OOD than linear probing. We show theoretically that this tradeoff between ID and OOD accuracy arises even in a simple setting: fine-tuning overparameterized two-layer linear networks. We prove that the OOD error of fine-tuning is high when we initialize with a fixed or random head -- this is because while fine-tuning learns the head, the lower layers of the neural network change simultaneously and distort the pretrained features. Our analysis suggests that the easy two-step strategy of linear probing then full fine-tuning (LP-FT), sometimes used as a fine-tuning heuristic, combines the benefits of both fine-tuning and linear probing. Empirically, LP-FT outperforms both fine-tuning and linear probing on the above datasets (1\% better ID, 10\% better OOD than full fine-tuning).},
	urldate = {2023-05-29},
	publisher = {arXiv},
	author = {Kumar, Ananya and Raghunathan, Aditi and Jones, Robbie and Ma, Tengyu and Liang, Percy},
	month = feb,
	year = {2022},
	note = {arXiv:2202.10054 [cs]},
	keywords = {Computer Science - Computer Vision and Pattern Recognition, Computer Science - Machine Learning},
}

@misc{vafaeikia_brief_2020,
	title = {A {Brief} {Review} of {Deep} {Multi}-task {Learning} and {Auxiliary} {Task} {Learning}},
	url = {http://arxiv.org/abs/2007.01126},
	doi = {10.48550/arXiv.2007.01126},
	abstract = {Multi-task learning (MTL) optimizes several learning tasks simultaneously and leverages their shared information to improve generalization and the prediction of the model for each task. Auxiliary tasks can be added to the main task to ultimately boost the performance. In this paper, we provide a brief review on the recent deep multi-task learning (dMTL) approaches followed by methods on selecting useful auxiliary tasks that can be used in dMTL to improve the performance of the model for the main task.},
	urldate = {2023-05-29},
	publisher = {arXiv},
	author = {Vafaeikia, Partoo and Namdar, Khashayar and Khalvati, Farzad},
	month = jul,
	year = {2020},
	note = {arXiv:2007.01126 [cs, stat]},
	keywords = {Computer Science - Computer Vision and Pattern Recognition, Computer Science - Machine Learning, Statistics - Machine Learning},
}

@misc{du_survey_2022,
	title = {A {Survey} of {Vision}-{Language} {Pre}-{Trained} {Models}},
	url = {http://arxiv.org/abs/2202.10936},
	doi = {10.48550/arXiv.2202.10936},
	abstract = {As transformer evolves, pre-trained models have advanced at a breakneck pace in recent years. They have dominated the mainstream techniques in natural language processing (NLP) and computer vision (CV). How to adapt pre-training to the field of Vision-and-Language (V-L) learning and improve downstream task performance becomes a focus of multimodal learning. In this paper, we review the recent progress in Vision-Language Pre-Trained Models (VL-PTMs). As the core content, we first briefly introduce several ways to encode raw images and texts to single-modal embeddings before pre-training. Then, we dive into the mainstream architectures of VL-PTMs in modeling the interaction between text and image representations. We further present widely-used pre-training tasks, and then we introduce some common downstream tasks. We finally conclude this paper and present some promising research directions. Our survey aims to provide researchers with synthesis and pointer to related research.},
	urldate = {2023-05-29},
	publisher = {arXiv},
	author = {Du, Yifan and Liu, Zikang and Li, Junyi and Zhao, Wayne Xin},
	month = jul,
	year = {2022},
	note = {arXiv:2202.10936 [cs]},
	keywords = {Computer Science - Computation and Language, Computer Science - Computer Vision and Pattern Recognition, Computer Science - Machine Learning},
}

@article{zhang_survey_2022,
	title = {A {Survey} on {Multi}-{Task} {Learning}},
	volume = {34},
	issn = {1558-2191},
	doi = {10.1109/TKDE.2021.3070203},
	abstract = {Multi-Task Learning (MTL) is a learning paradigm in machine learning and its aim is to leverage useful information contained in multiple related tasks to help improve the generalization performance of all the tasks. In this paper, we give a survey for MTL from the perspective of algorithmic modeling, applications and theoretical analyses. For algorithmic modeling, we give a definition of MTL and then classify different MTL algorithms into five categories, including feature learning approach, low-rank approach, task clustering approach, task relation learning approach and decomposition approach as well as discussing the characteristics of each approach. In order to improve the performance of learning tasks further, MTL can be combined with other learning paradigms including semi-supervised learning, active learning, unsupervised learning, reinforcement learning, multi-view learning and graphical models. When the number of tasks is large or the data dimensionality is high, we review online, parallel and distributed MTL models as well as dimensionality reduction and feature hashing to reveal their computational and storage advantages. Many real-world applications use MTL to boost their performance and we review representative works in this paper. Finally, we present theoretical analyses and discuss several future directions for MTL.},
	number = {12},
	journal = {IEEE Transactions on Knowledge and Data Engineering},
	author = {Zhang, Yu and Yang, Qiang},
	month = dec,
	year = {2022},
	note = {Conference Name: IEEE Transactions on Knowledge and Data Engineering},
	keywords = {Classification algorithms, Computational modeling, Data models, Multi-task learning, Supervised learning, Task analysis, Training, Transfer learning, artificial intelligence, machine learning},
	pages = {5586--5609},
}

@misc{zhou_comprehensive_2023,
	title = {A {Comprehensive} {Survey} on {Pretrained} {Foundation} {Models}: {A} {History} from {BERT} to {ChatGPT}},
	shorttitle = {A {Comprehensive} {Survey} on {Pretrained} {Foundation} {Models}},
	url = {http://arxiv.org/abs/2302.09419},
	doi = {10.48550/arXiv.2302.09419},
	abstract = {Pretrained Foundation Models (PFMs) are regarded as the foundation for various downstream tasks with different data modalities. A PFM (e.g., BERT, ChatGPT, and GPT-4) is trained on large-scale data which provides a reasonable parameter initialization for a wide range of downstream applications. BERT learns bidirectional encoder representations from Transformers, which are trained on large datasets as contextual language models. Similarly, the generative pretrained transformer (GPT) method employs Transformers as the feature extractor and is trained using an autoregressive paradigm on large datasets. Recently, ChatGPT shows promising success on large language models, which applies an autoregressive language model with zero shot or few shot prompting. The remarkable achievements of PFM have brought significant breakthroughs to various fields of AI. Numerous studies have proposed different methods, raising the demand for an updated survey. This study provides a comprehensive review of recent research advancements, challenges, and opportunities for PFMs in text, image, graph, as well as other data modalities. The review covers the basic components and existing pretraining methods used in natural language processing, computer vision, and graph learning. Additionally, it explores advanced PFMs used for different data modalities and unified PFMs that consider data quality and quantity. The review also discusses research related to the fundamentals of PFMs, such as model efficiency and compression, security, and privacy. Finally, the study provides key implications, future research directions, challenges, and open problems in the field of PFMs. Overall, this survey aims to shed light on the research of the PFMs on scalability, security, logical reasoning ability, cross-domain learning ability, and the user-friendly interactive ability for artificial general intelligence.},
	urldate = {2023-05-29},
	publisher = {arXiv},
	author = {Zhou, Ce and Li, Qian and Li, Chen and Yu, Jun and Liu, Yixin and Wang, Guangjing and Zhang, Kai and Ji, Cheng and Yan, Qiben and He, Lifang and Peng, Hao and Li, Jianxin and Wu, Jia and Liu, Ziwei and Xie, Pengtao and Xiong, Caiming and Pei, Jian and Yu, Philip S. and Sun, Lichao},
	month = may,
	year = {2023},
	note = {arXiv:2302.09419 [cs]},
	keywords = {Computer Science - Artificial Intelligence, Computer Science - Computation and Language, Computer Science - Machine Learning},
}

@article{yuan_model_2006,
	title = {Model selection and estimation in regression with grouped variables},
	volume = {68},
	issn = {1467-9868},
	url = {https://onlinelibrary.wiley.com/doi/abs/10.1111/j.1467-9868.2005.00532.x},
	doi = {10.1111/j.1467-9868.2005.00532.x},
	abstract = {Summary. We consider the problem of selecting grouped variables (factors) for accurate prediction in regression. Such a problem arises naturally in many practical situations with the multifactor analysis-of-variance problem as the most important and well-known example. Instead of selecting factors by stepwise backward elimination, we focus on the accuracy of estimation and consider extensions of the lasso, the LARS algorithm and the non-negative garrotte for factor selection. The lasso, the LARS algorithm and the non-negative garrotte are recently proposed regression methods that can be used to select individual variables. We study and propose efficient algorithms for the extensions of these methods for factor selection and show that these extensions give superior performance to the traditional stepwise backward elimination method in factor selection problems. We study the similarities and the differences between these methods. Simulations and real examples are used to illustrate the methods.},
	language = {en},
	number = {1},
	urldate = {2023-05-28},
	journal = {Journal of the Royal Statistical Society: Series B (Statistical Methodology)},
	author = {Yuan, Ming and Lin, Yi},
	year = {2006},
	note = {\_eprint: https://onlinelibrary.wiley.com/doi/pdf/10.1111/j.1467-9868.2005.00532.x},
	keywords = {Analysis of variance, Lasso, Least angle regression, Non-negative garrotte, Piecewise linear solution path},
	pages = {49--67},
}

@article{soudry_implicit_2018,
	title = {The implicit bias of gradient descent on separable data},
	volume = {19},
	issn = {1532-4435},
	abstract = {We examine gradient descent on unregularized logistic regression problems, with homogeneous linear predictors on linearly separable datasets. We show the predictor converges to the direction of the max-margin (hard margin SVM) solution. The result also generalizes to other monotone decreasing loss functions with an infimum at infinity, to multi-class problems, and to training a weight layer in a deep network in a certain restricted setting. Furthermore, we show this convergence is very slow, and only logarithmic in the convergence of the loss itself. This can help explain the benefit of continuing to optimize the logistic or cross-entropy loss even after the training error is zero and the training loss is extremely small, and, as we show, even if the validation loss increases. Our methodology can also aid in understanding implicit regularization in more complex models and with other optimization methods.},
	number = {1},
	journal = {The Journal of Machine Learning Research},
	author = {Soudry, Daniel and Hoffer, Elad and Nacson, Mor Shpigel and Gunasekar, Suriya and Srebro, Nathan},
	month = jan,
	year = {2018},
	keywords = {generalization, gradient descent, implicit regularization, logistic regression, margin},
	pages = {2822--2878},
}

@inproceedings{chizat_implicit_2020,
	title = {Implicit {Bias} of {Gradient} {Descent} for {Wide} {Two}-layer {Neural} {Networks} {Trained} with the {Logistic} {Loss}},
	url = {https://proceedings.mlr.press/v125/chizat20a.html},
	abstract = {Neural networks trained to minimize the logistic (a.k.a. cross-entropy) loss with gradient-based methods are observed to perform well in many supervised classification tasks. Towards understanding this phenomenon, we analyze the training and generalization behavior of infinitely wide two-layer neural networks with homogeneous activations. We show that the limits of the gradient flow on exponentially tailed losses can be fully characterized as a max-margin classifier in a certain non-Hilbertian space of functions. In presence of hidden low-dimensional structures, the resulting margin is independent of the ambiant dimension, which leads to strong generalization bounds. In contrast, training only the output layer implicitly solves a kernel support vector machine, which a priori does not enjoy such an adaptivity. Our analysis of training is non-quantitative in terms of running time but we prove computational guarantees in simplified settings by showing equivalences with online mirror descent. Finally, numerical experiments suggest that our analysis describes well the practical behavior of two-layer neural networks with ReLU activation and confirm the statistical benefits of this implicit bias.},
	language = {en},
	urldate = {2023-05-28},
	booktitle = {Proceedings of {Thirty} {Third} {Conference} on {Learning} {Theory}},
	publisher = {PMLR},
	author = {Chizat, Lénaïc and Bach, Francis},
	month = jul,
	year = {2020},
	note = {ISSN: 2640-3498},
	pages = {1305--1338},
}

@misc{lyu_gradient_2020,
	title = {Gradient {Descent} {Maximizes} the {Margin} of {Homogeneous} {Neural} {Networks}},
	url = {http://arxiv.org/abs/1906.05890},
	doi = {10.48550/arXiv.1906.05890},
	abstract = {In this paper, we study the implicit regularization of the gradient descent algorithm in homogeneous neural networks, including fully-connected and convolutional neural networks with ReLU or LeakyReLU activations. In particular, we study the gradient descent or gradient flow (i.e., gradient descent with infinitesimal step size) optimizing the logistic loss or cross-entropy loss of any homogeneous model (possibly non-smooth), and show that if the training loss decreases below a certain threshold, then we can define a smoothed version of the normalized margin which increases over time. We also formulate a natural constrained optimization problem related to margin maximization, and prove that both the normalized margin and its smoothed version converge to the objective value at a KKT point of the optimization problem. Our results generalize the previous results for logistic regression with one-layer or multi-layer linear networks, and provide more quantitative convergence results with weaker assumptions than previous results for homogeneous smooth neural networks. We conduct several experiments to justify our theoretical finding on MNIST and CIFAR-10 datasets. Finally, as margin is closely related to robustness, we discuss potential benefits of training longer for improving the robustness of the model.},
	urldate = {2023-05-28},
	publisher = {arXiv},
	author = {Lyu, Kaifeng and Li, Jian},
	month = dec,
	year = {2020},
	note = {arXiv:1906.05890 [cs, stat]},
	keywords = {Computer Science - Machine Learning, Computer Science - Neural and Evolutionary Computing, Statistics - Machine Learning},
}

@inproceedings{gunasekar_implicit_2018,
	title = {Implicit {Bias} of {Gradient} {Descent} on {Linear} {Convolutional} {Networks}},
	volume = {31},
	url = {https://proceedings.neurips.cc/paper/2018/hash/0e98aeeb54acf612b9eb4e48a269814c-Abstract.html},
	urldate = {2023-05-28},
	booktitle = {Advances in {Neural} {Information} {Processing} {Systems}},
	publisher = {Curran Associates, Inc.},
	author = {Gunasekar, Suriya and Lee, Jason D and Soudry, Daniel and Srebro, Nati},
	year = {2018},
}

@inproceedings{woodworth_kernel_2020,
	title = {Kernel and {Rich} {Regimes} in {Overparametrized} {Models}},
	url = {https://proceedings.mlr.press/v125/woodworth20a.html},
	abstract = {A recent line of work studies overparametrized neural networks in the “kernel regime,” i.e. when  during training the network behaves as a kernelized linear predictor, and thus, training with gradient descent has the effect of finding the corresponding minimum RKHS norm solution.  This stands in contrast to other studies which demonstrate how gradient descent on overparametrized  networks can induce rich implicit biases that are not RKHS norms.  Building on an observation by {\textbackslash}citet\{chizat2018note\}, we show how the {\textbackslash}textbf\{{\textbackslash}textit\{scale of the initialization\}\} controls the transition between the “kernel” (aka lazy) and “rich” (aka active) regimes and affects generalization properties in multilayer homogeneous models. We provide a complete and detailed analysis for a family of simple depth-DDD linear networks that exhibit an interesting and meaningful transition between the kernel and rich regimes, and highlight an interesting role for the {\textbackslash}emph\{width\}  of the models. We further demonstrate this transition empirically for matrix factorization and multilayer non-linear networks.},
	language = {en},
	urldate = {2023-05-28},
	booktitle = {Proceedings of {Thirty} {Third} {Conference} on {Learning} {Theory}},
	publisher = {PMLR},
	author = {Woodworth, Blake and Gunasekar, Suriya and Lee, Jason D. and Moroshko, Edward and Savarese, Pedro and Golan, Itay and Soudry, Daniel and Srebro, Nathan},
	month = jul,
	year = {2020},
	note = {ISSN: 2640-3498},
	pages = {3635--3673},
}
\newpage
\appendix

\section{Derivation of regularization penalties}
\subsection{Multi-task learning in ReLU networks}
\label{app:mt-relu}
Multi-task ReLU networks with a shared feature layer and $O$ outputs can be written as
\begin{align}
    f_w(x)=\sum_{h=1}^H \vec{v}_h(\langle \vec{w}_h,\vec{x}\rangle)_+ = \sum_{h=1}^H \vec{m}_h(\langle \vec{\theta}_h,\vec{x}\rangle)_+,\\
    \vec{m_h}=\vec{v}_h\|\vec{w}_h\|_2, \vec{\theta}_h=\vec{w}_h/\|\vec{w}_h\|_2,
\end{align}
where $\vec{w}_h, m_h \in \mathbb{R}^O$.
We are interested in how minimizing the parameter norm $\sum_{h=1}^H \|\vec{w}_h\|_2^2+\|\vec{v}_h\|_2^2$ maps to minimizing a norm over the solution weights $\vec{m}$. For a given $\vec{m}_h\in\mathbb{R}^O$, the norm of the input weight $\|\vec{w}_h\|_2$ is a free parameter $z_h>0$, as we can set $\vec{v}_h:=\vec{m}_h/z$.
As any value for $z_h$ leads to the same function, we choose $z_h$ so as to minimize
\begin{equation}
    \|\vec{w}_h\|_2^2+\|\vec{v}_h\|_2^2=z_h^2+\|\vec{m}_h\|_2^2/z_h^2.
\end{equation}
Setting the derivative to zero implies
\begin{equation}
    z_h^2=\|\vec{m_h}\|_2.
\end{equation}
As a result,
\begin{equation}
    \sum_{h=1}^H \|\vec{w}_h\|_2^2+\|\vec{v}_h\|_2^2=2\sum_{h=1}^H\|\vec{m}_h\|_2,
\end{equation}
where the right-hand side of the equation is the $\ell_{1,2}$ norm.
\subsection{Finetuning in diagonal linear networks}
\label{app:diag-ft}
\diagfinetune*
\begin{proof}
    We derive this result in two steps: first, we note that after pretraining (because of the infinitesimal initial weight magnitudes), one of the pathways remains at zero and one pathway encodes the effective linear predictor $\beta^{aux}$. Thus, the first hidden layer has the weights $\sqrt{\beta^{aux}}$, which yields the new weight magnitude along the positive and negative pathway (due to the summing operation we perform following finetuning, which ensures that the weights of both pathways are set to $\sqrt{\beta^{aux}}$). Having set the readout initialization to $\gamma$, we then apply Theorem 4.1 in \cite{azulay_implicit_2021}.
\end{proof}
\subsection{Finetuning in ReLU networks}
\label{app:relu-ft}
\relufinetune*
\begin{proof}
    We define the norm of the deviation from the parameters at initialization as
\begin{equation}
    R(w,v|w^{aux},\gamma):=\sum_{h=1}^H(v_h-\gamma)^2+\|\vec{w}_h-w^{aux}_h\|_2^2.
\end{equation}
Note that $w$ and $v$ are parameterized in a redundant manner, as we can multiply $v_h$ by a constant $a>0$ and obtain the same function as long as we divide $\vec{w}_h$ by the same constant. To parameterize a function in a unique fashion, we describe it in terms of the normalized hidden weight $\vec{\theta}_h:=\vec{w}_h/\|\vec{w}_h\|_2$ and the (signed) magnitude $m_h:=\|\vec{w}_h^{(1)}\|_2 v_h$. We wish to compute the norm of the deviation from the initialization in terms of this parameterization. Specifically, we compute
\begin{equation}
    \tilde{R}(\vec{\theta},m|w^{aux},\gamma):=\min_{w,v}R(w,v|w^{aux},\gamma)\text{ s.t. }\forall_h \vec{\theta}_h=\vec{w}_h/\|\vec{w}_h\|_2,m_h=\|\vec{w}_h\|_2v_h.
\end{equation}
Note that Woodworth \textit{et al.}\ \cite{woodworth_kernel_2020} solve a version of this optimization problem for diagonal linear networks (though only for the case in which corresponding first and second layer weights are initialized with equal magnitude). 

As this optimization problem decomposes over hidden units, we can solve it for each hidden unit individually:
\begin{align}
    \tilde{R}(\vec{\theta},m|w^{aux},\gamma)&=\sum_{h=1}^H\tilde{r}(\vec{\theta}_h,m_h|\vec{w}^{aux}_{h},\gamma),\\
    \tilde{r}(\vec{\theta}_h,m_h|\vec{w}^{aux}_{h},\gamma)&:=\min_{\vec{w}_h,v_h}(v_h-\gamma)^2+\|\vec{w}_h-\vec{w}^{aux}_{h}\|_2^2\\&\text{ s.t. }\vec{\theta}_h=\vec{w}_h/\|\vec{w}_h\|_2,m_h=\|\vec{w}_h\|_2v_h.
\end{align}
We can express this as
\begin{equation}
    \tilde{r}(\vec{\theta}_h,m_h|w_{h}^{aux},\gamma)=\min_{u>0}p(u),\quad p(u):=(m_h/u-\gamma)^2+\left\|u\vec{\theta}_h-\vec{w}_{h}^{aux}\right\|_2^2.
\end{equation}
Note that this function is smooth and diverges to $\infty$ both for $u\to\infty$ and $u\to 0$. As long as we have a unique positive stationary point $u^{*}$, this stationary point must therefore be a minimum. We simplify
\begin{equation}
    p(u)=(m_h/u-\gamma)^2+u^2+\|\vec{w}_h^{aux}\|_2^2-2u\langle\vec{\theta}_h,\vec{w}_h^{aux}\rangle,
\end{equation}
and, defining $m_{h}^{aux}:=\|\vec{w}_h^{aux}\|_2$, and the cosine similarity between corresponding pretraining and finetuning task weights $\rho_h:=\frac{\langle \vec{w}_h^{(1)},\vec{w}_h^{aux}\rangle}{\|\vec{w}_h^{(1)}\|_2\|\vec{w}_h^{aux}\|_2}$, write this as
\begin{equation}
    p(u)=(m_h/u-\gamma)^2+u^2+(m_h^{aux})^2-2um_h^{aux}\rho_h.
\end{equation}
We then determine the stationary points by computing the derivative
\begin{equation}
    p'(u)=-2(m_h/u-\gamma)m_h/u^2+2u-2m_h^{aux}\rho_h,
\end{equation}
and setting it to zero. Simplifying this equation (and multiplying by $u^3/2$) yields
\begin{equation}
    -m_h^2+\gamma m_hu-m_h^{aux}\rho_hu^3+u^4=0.
\end{equation}
We can compute the solution to this quartic equation, e.g. using \texttt{np.roots} and select the (unique, in all empirical cases we explored) positive real root, $u^*$. We can then compute the original norm by plugging in $u^*$:
\begin{equation}
    \tilde{r}(\vec{\theta}_h,m_h|\vec{w}_h^{aux},\gamma)=(m_h/u^*-\gamma)^2+u^{*2}+(m_h^{aux})^2-2u^*m_h^{aux}\rho_h.
\end{equation}
\end{proof}
\subsection{Proof of Proposition~\ref{propnested}}
\label{app:proof-prop}
\propnested*
\begin{proof}
Below, for simplicity's sake, we always denote $\beta^{main}=\beta$, $m^{main}=m$, and $\theta^{main}=\theta$.

\noindent\textbf{MTL penalty.}
We first consider the MTL penalty, which is identical for diagonal linear networks and ReLU networks. We assume (without loss of generalization) $m^{aux}>0$. In that case,
\begin{align}
\begin{split}
    \ell\text{-order}&=\frac{\partial\log\left(\sqrt{(m^{aux})^2+m^2}-m^{aux}\right)}{\partial\log m}=\\
    \frac{m}{\sqrt{(m^{aux})^2+m^2}-m^{aux}}\frac{m}{\sqrt{(m^{aux})^2+m^2}}&=
    \frac{m^2}{(m^{aux})^2+m^2-m^{aux}\sqrt{(m^{aux})^2+m^2}},
\end{split}
\end{align}
and
\begin{align}
    \begin{split}
        \text{FD}&=\frac{\partial\log\left(\sqrt{(m^{aux})^2+m^2}-m^{aux}\right)}{\partial\log m^{aux}}=\\
        \frac{m^{aux}}{\sqrt{(m^{aux})^2+m^2}-m^{aux}}\left(\frac{m^{aux}}{\sqrt{(m^{aux})^2+m^2}}-1\right)&=
        \frac{(m^{aux})^2-m^{aux}\sqrt{(m^{aux})^2+m^2}}{(m^{aux})^2+m^2-m^{aux}\sqrt{(m^{aux})^2+m^2}}.
    \end{split}
\end{align}
Thus, $\ell\text{-order}+\text{FD}=1$.

\textbf{Finetuning in diagonal network.}
We assume (without loss of generalization) $\beta_d^{aux}>0$ and derive
\begin{align}
    \begin{split}
        \ell\text{-order}=\frac{\partial\log\left(\left(|\beta_d^{aux}|\right) q\left(\tfrac{2\beta_d}{\beta_d^{aux}}\right)\right)}{\partial\log\beta_d}=
        \frac{2\beta_d}{(\beta_d^{aux})^2 q\left(\tfrac{2\beta_d}{|\beta_d^{aux}|}\right)}q'\left(\tfrac{2\beta_d}{|\beta_d^{aux}|}\right).
    \end{split}
\end{align}
and
\begin{align}
    \begin{split}
        \text{FD}&=\frac{\partial\log\left(|\beta_d^{aux}| q\left(\tfrac{2\beta_d}{|\beta_d^{aux}|}\right)\right)}{\partial\log|\beta_d^{aux}|}\\
        \frac{1}{q\left(\tfrac{2\beta_d}{|\beta_d^{aux}|}\right)}\left(q\left(\tfrac{2\beta_d}{|\beta_d^{aux}|}\right)-q'\left(\tfrac{2\beta_d}{|\beta_d^{aux}|}\right)\tfrac{2\beta_d}{(\beta_d^{aux})^2}\right)&=1-\frac{2\beta_d}{(\beta_d^{aux})^2 q\left(\tfrac{2\beta_d}{|\beta_d^{aux}|}\right)}q'\left(\tfrac{2\beta_d}{|\beta_d^{aux}|}\right).
    \end{split}
\end{align}
Thus, $\ell\text{-order}+\text{FD}=1$.

\textbf{Finetuning in ReLU networks.}
We define
\begin{equation}
    \tilde{r}(\vec{\theta}_h,m_h,u|\vec{\theta}_h^{aux},m_h^{aux}):=(m_h/u-\gamma)^2+u^2+m_h^{aux}-2u\sqrt{m_h^{aux}}\langle\vec{\theta}_h,\vec{\theta}_h^{aux}\rangle,
\end{equation}
where, without loss of generalization, we assume that $m_h^{aux}>0$ and note that
\begin{equation}
    r(\vec{\theta}_h,m_h|\vec{\theta}_h^{aux},m_h^{aux})=\tilde{r}(\vec{\theta}_h,m_h,u^*|\vec{\theta}_h^{aux},m_h^{aux}),
\end{equation}
where $u^*$ is the critical point, depending on $\vec{\theta}_h,m_h,\vec{\theta}_h^{aux},m_h^{aux}$.
By the chain rule,
\begin{equation}
    \frac{\partial r(\vec{\theta}_h,m_h|\vec{\theta}_h^{aux},m_h^{aux})}{\partial m_h}=\frac{\partial\tilde{r}(\vec{\theta}_h,m_h,u^*|\vec{\theta}_h^{aux},m_h^{aux})}{\partial m_h}+\frac{\partial\tilde{r}(\vec{\theta}_h,m_h,u^*|\vec{\theta}_h^{aux},m_h^{aux})}{\partial u^*}\frac{\partial u^*}{m_h},
\end{equation}
and the analogous statement is true for $m_h^{aux}$.

However, by definition,
\begin{equation}
\label{eq:ustar}
    \frac{\partial\tilde{r}(\vec{\theta}_h,m_h,u^*|\vec{\theta}_h^{aux},m_h^{aux})}{\partial u^*}=0.
\end{equation}
Thus,
\begin{align}
\begin{split}
    \ell\text{-order}&=\frac{m_h}{r}\frac{\partial\tilde{r}}{\partial{m_h}}=\frac{2m_h^2}{r(u^*)^2},\\\text{FD}&=\frac{m^{aux}_h}{r}\frac{\partial\tilde{r}}{\partial{m^{aux}_h}}=\frac{m^{aux}_h}{r}\left(1-\frac{\rho_h u^*}{\sqrt{m_h^{aux}}}\right)=\frac{1}{r}\left(m^{aux}_h-\rho_h\sqrt{m^{aux}_h}u^*\right).
\end{split}
\end{align}
To prove the statement, we therefore must prove that
\begin{equation}
\label{eq:toprove}
    \frac{2m_h^2}{(u^*)^2}+m^{aux}_h-\rho\sqrt{m^{aux}_h}u^*=r=\frac{m_h^2}{(u^*)^2}+(u^*)^2+(m_{h}^{aux})-2u^*m_{h}^{aux}\rho_h.
\end{equation}
To do so, we leverage that (\ref{eq:ustar}) implies
\begin{equation}
    -\frac{m_h^2}{(u^*)^2}+(u^*)^2-u^*\sqrt{m_h^{aux}}\rho_h=0,
\end{equation}
and therefore
\begin{equation}
    \frac{m_h^2}{(u^*)^2}=(u^*)^2-u^*\sqrt{m_h^{aux}}\rho_h=0.
\end{equation}
Plugging this into (\ref{eq:toprove}) yields
\begin{equation}
    \frac{2m_h^2}{(u^*)^2}+m^{aux}_h-\rho\sqrt{m^{aux}_h}u^*=\frac{m_h^2}{(u^*)^2}+(u^*)^2+m^{aux}_h-2\rho\sqrt{m^{aux}_h}u^*=r
\end{equation}
and completes the proof.
\end{proof}

\section{Detailed methods}
\label{app:methods}
We trained all networks with PyTorch \cite{paszke_pytorch_2019}.

\subsection{Diagonal linear networks}
We train the diagonal linear networks until they have reached a mean squared error below $10^{-10}$, for pretraining, finetuning, multi-task learning and single-task learning. As a rough heuristic for determining the learning rate, we begin at a learning rate of $10^6$ and divide by ten whenever the loss exceeds $100$ (indicating divergence).

\subsection{ReLU networks}
We train ReLU networks with 1000 hidden units using the same learning rate heuristic and train until the mean squared error has reached a threshold of $10^{-8}$.

\subsection{ResNets}
We consider a ResNet-18 in the standard PyTorch implementation and train it with gradient descent with a learning rate of $10^{-3}$ and momentum of $0.9$.

\subsection{Vision Transformers}
We consider a Vision Transformer with seven layers, eight heads, 384 hidden dimensions and an MLP with 1,536 hidden dimensions. We train the transformer using the Adam optimizer with a learning rate of $10^{-3}$ and weight decay $0.00005$.

\subsection{Reproducibility}
\label{app:reproducibility}
We provide a code database that enables reproducing all data used to create the main figures. We used one CPU for each diagonal network (total number of experiments: 5,544) and ReLU network experiment (total number of experiments: 6,740) and all of them took up to four hours. We used on GPU for the CIFAR and ViT experiments and all of them took up to twelve hours (total number of experiments: 5,000). As a rough (upper-bound) estimate, the experiments therefore required 50,000 hours of CPU time and 60,000 hours of GPU time.

\section{Robustness of main results to choice of number of auxiliary task samples and input dimension}

To increase confidence that our main results are robust to the number of data samples used (1024 auxiliary task samples and up to 1024 main task samples in most of our experiments), and the number of ground-truth units in the teacher network (6), we repeated the experiments of Fig.~\ref{fig:shared}d with 8192 auxiliary task samples and 40 ground-truth features. Indeed, in this setting the rich regime also helps with generalization if and only if the teacher units are sparse (Fig.~\ref{fig:big}a). Further, MTL and PT+FT tend to outperform STL if the features are overlapping and MTL tends to outperform PT+FT (Fig.~\ref{fig:big}b). In particular, the finetuned networks still benefit from feature learning, especially if some features are novel.

\begin{figure*}[t!]
    \centering
    \includegraphics{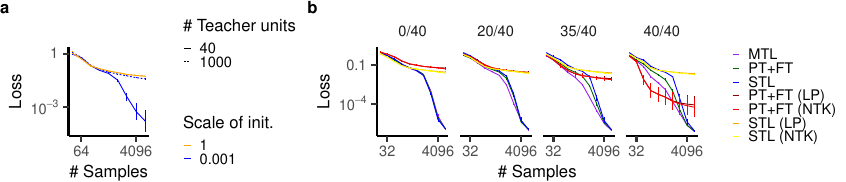}
    \caption{Larger-scale teacher-student experiments. \textbf{a}, Generalization loss of shallow ReLU networks trained on data from a ReLU teacher network. \textbf{b}, Generalization loss for different numbers of overlapping features (out of 40 total) between main and auxiliary tasks. NTK indicates the (lazy) tangent kernel solution. This is comparable to Fig.~\ref{fig:shared}d, except with more teacher units and more data.}
    \label{fig:big}
\end{figure*}

\section{Analysis of learned solutions in linear and nonlinear networks}
\label{app:sparsity_analysis}

Our theory predicts inductive biases towards solutions that minimize norms, often either $\ell_1$-like (incentivizing sparsity) or $\ell_2$-like.  Our experiments in the main text corroborate these description by analyzing how sample complexity depends on the feature sparsity of the ground-truth task solution, and how the sparse feature structures of the main and auxliary tasks relate.  However, this evidence for sparsity biases (or lack thereof) is indirect; here we present more direct analyses of the learned solutions in linear and nonlinear networks that support the account we provide in the main text.

\subsection{Diagonal linear networks}
\begin{figure*}[t!]
    \centering
    \includegraphics{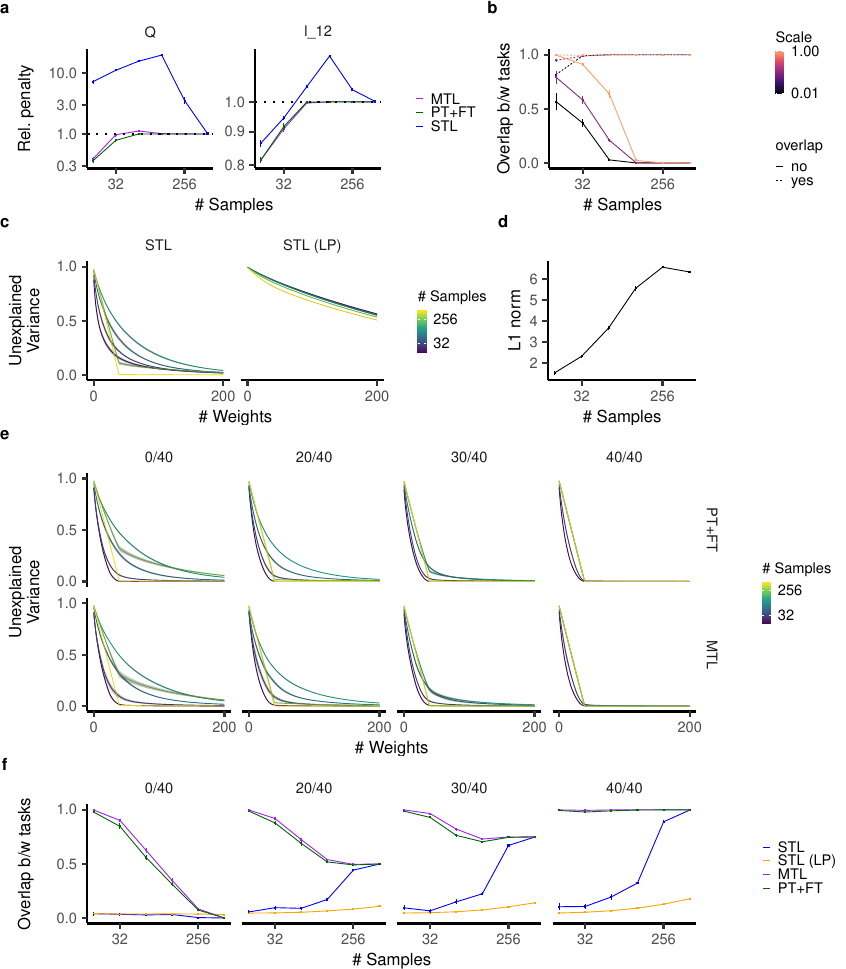}
    \caption{\textbf{a}, $\ell_{1,2}$ norm and $Q$ penalty for MTL, STL, and PT+FT networks from Fig.~\ref{fig:shared}b (40/40 overlapping features case). \textbf{b}, Proportion of the weight norm in the 40 dimensions relevant for the auxiliary task, for the networks in Fig.~\ref{fig:nested}b,c. Weight rescaling decreases this overlap. \textbf{c}, Proportion of variance concentrated in the top $k$ weights, as a function of $k$, for training on a single-task. When both layers are trained from small initialization (STL), this variance decreases much more rapidly than for pure linear readout training (STL (LP)), demonstrating the sparsity of the learned solution. \textbf{d}, L1 norm for STL as a function of the number of samples. \textbf{e}, Proportion of variance across different overlaps and for different learning setups (see also Fig.~\ref{fig:shared}b). The rapid decrease in variance demonstrates the sparsity of the learned solutions both for PT+FT and MTL. \textbf{f}, Proportion of weight norm in the 40 dimensions relevant for the auxiliary task (see also Fig.~\ref{fig:shared}b).}
    \label{fig:supp-diagonal}
\end{figure*}

To check whether the implicit regularization theory is a good explanation for these performance results, we directly measured the $\ell_{1,2}$ and $Q$ norms of the solutions learned by networks, compared to the corresponding penalties of the ground truth weights. In Fig.~\ref{fig:supp-diagonal}a we see that as the amount of training data increases, the norms all converge to that of the ground truth solution, but in the low-sample regime, MTL and PT+FT find solutions with lower values of their corresponding norm than the ground-truth function, consistent with the implicit regularization picture (by contrast, STL does not consistently find solutions with lower values of these norms than the ground truth).

Our theory predicts that weight rescaling by a factor less than 1.0 following pretraining reduces the propensity of the network to share features between auxiliary and main tasks during finetuning.  We confirm that this is the case in Fig.~\ref{fig:supp-diagonal}b by analyzing the overlap between the auxiliary task features and the learned linear predictor for the main task.

In Fig.~\ref{fig:supp-diagonal}c we show that our measure of effective sparsity of learned solutions in diagonal linear networks effectively distinguishes between networks trained in the feature selection regime and networks trained with linear probing (only training second-layer weights).  Moreover, in Fig.~\ref{fig:supp-diagonal}d we show that the L1 norm of the solution increases with the training sample size, consistent with a bias towards L1 minimization.  There is an interesting discrepancy between the behavior of the sparsity of the solutions (nonmonotonic, see Fig.~\ref{fig:supp-diagonal}c) and the L1 norm (largely monotonic, see Fig.~\ref{fig:supp-diagonal}d). This is attributable to the discrepancy between the L1 norm (which diagonal linear networks in the rich regime are biased to minimize) and sparsity (for which L1 norm is only a proxy).

In Fig.~\ref{fig:supp-diagonal}e we show the sparsity of networks trained on teacher with different values of the number of overlapping features between main and auxiliary tasks (each of which uses 40 features). We find that learned solutions across a range of overlaps are as sparse as using single-task learning (see Fig.~\ref{fig:supp-diagonal}c) when task features do not overlap (0/40 case) and more sparse otherwise (on account of the bias toward reuse of the sparse features learned during the auxiliary task, see next paragraph).

In Fig.~\ref{fig:supp-diagonal}f we show the overlap between features active on either task, finding that learned main task solutions are biased to share auxiliary task features when few samples are available.

\subsection{ReLU networks}

We adopt a clustering-based approach to analyzing the effective sparse structure of learned task solutions.  Specifically, for a given trained network, we perform k-means clustering with a predetermined value of $K$ clusters on the normalized input weights to each hidden-layer neuron in the network\footnote{weighting the importance of each unit to the k-means objective by the weight of its contribution to the network's input-output function, specifically the magnitude of the product of its associated input and output weights}. We measure the extent to which $K$ cluster centers are able to explain the variance in input weights across hidden units; the fraction of variance left unexplained is commonly referred to as the ``inertia.''  For values of $K$ at which the inertia is close to zero, we can say that (to a good approximation) the network effectively makes use of at most $K$ nonlinear features.

\subsubsection{Single-task learning: rich inductive bias yields clusters of similarly tuned neurons that approximate sparse ground-truth features}

In the single-task learning case, we measure the inertia of trained networks as a function of $K$.  We find that for networks in the rich regime (small initialization scale), for tasks with sparse ground-truth (six units in the ReLU teacher network), the networks do indeed learn solutions that make use of approximately six nonlinear features (Fig.~\ref{fig:supp_sparsity}a).  For tasks with many (1000) units in the teacher network, the network finds solutions that use a small number of feature clusters when main task samples are limited, but gradually uses more clusters as the number of samples is increased (Fig.~\ref{fig:supp_sparsity}a), at which point the network matches the teacher function very well.  This bias towards sparser-than-ground-truth solutions given insufficient data corroborates our claim of an inductive bias towards sparse solutions. By contrast, networks in the lazy learning regime (large initialization scale) display no such bias, corroborating our claim that the sparse $\ell_1$-like inductive bias is a property of the rich regime but not the lazy regime.  Interestingly, in the sparse ground-truth case learned solutions are relatively less sparse for an intermediate number of training examples. This may arise because an $\ell_1$-like inductive bias is not exactly the same as a bias toward sparse solutions over nonlinear features, particularly when training data is limited.  We leave an in-depth investigation of this phenomenon to future work.

\begin{figure*}[t!]
    \centering
    \includegraphics{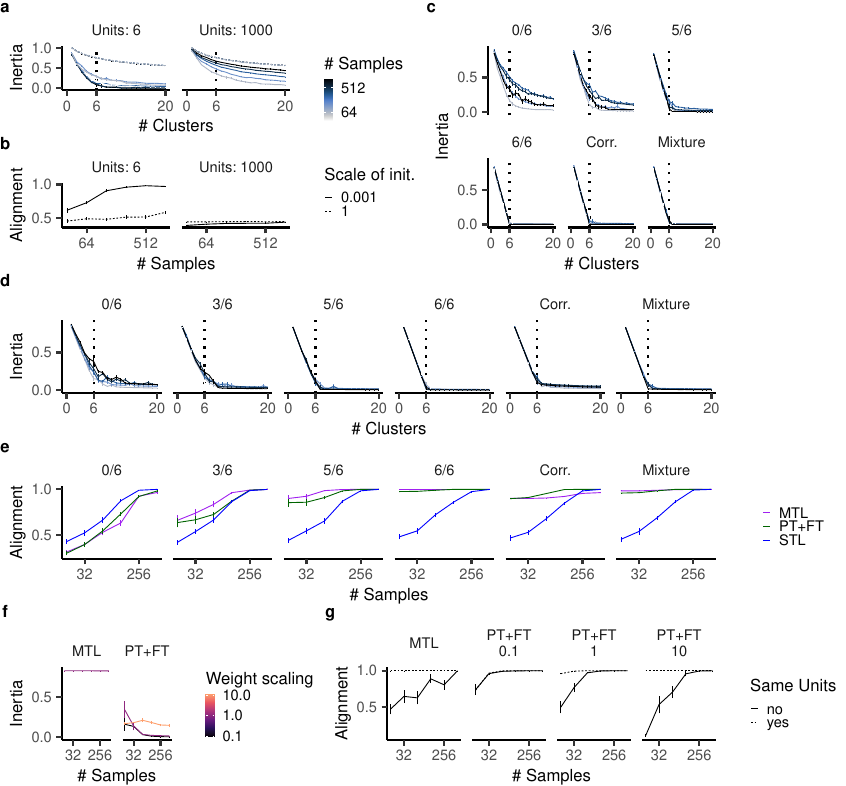}
    \caption{Analysis of effective sparsity of learned ReLU network solutions. \textbf{a} Inertia (k-means reconstruction error for clustering of hidden-unit input weights) as a function of the number of clusters used for $k$-means, for different numbers of main task samples and ground-truth teacher network units, in single-task learning. \textbf{b} Alignment score -- average alignment (across teacher units) of the best-aligned student network cluster uncovered via k-means. \textbf{c}, Inertia for networks trained using PT+FT for the tasks of Fig.~\ref{fig:shared}d,e and Fig.~\ref{fig:corr}a. \textbf{d}, Same as panel \emph{c} but for networks trained with MTL. \textbf{e},  Alignment score for networks trained with MTL, PT+FT, and STL on the same tasks as in panels \emph{c} and \emph{d}. \textbf{f} Inertia (using $k=1$ clusters) for networks trained on an auxiliary task that relies on only one ground-truth feature, which is one of the six ground-truth features used in the auxiliary task (as in Fig.~\ref{fig:nested}e,f), using MTL or PT+FT with various rescaling factors applied to the weights prior to finetuning. \textbf{g} Alignment score for the networks and task in panel \emph{f}.}
    \label{fig:supp_sparsity}
\end{figure*}

Our clustering analysis allows us to measure the extent to which the effective features employed by the network (cluster centers) are aligned with the ground-truth task features.  Specifically, for each teacher unit, we compute an ``alignment score'' between teacher and student networks by taking each teacher unit, measuring its cosine similarity with all the cluster centers, choosing the maximum value, and averaging this quantity across all teacher units.  We find that the learned feature clusters are indeed highly aligned with the ground-truth teacher features in the sparse ground-truth case, and moreso as the number of main task samples (and consequently task performance) increases (Fig.~\ref{fig:supp_sparsity}b).

\subsubsection{Pretraining+finetuning finds sparse solutions and improves alignment of feature clusters learned during pretraining}

We find that pretraining+finetuning improves performance over single-task learning when main and auxiliary task features are shared (or correlated), and maintains an apparent bias toward sparsity in new task-specific features.  To corroborate these claims, we performed our clustering analysis on the solutions learned through PT+FT.  We find that the solutions learned are indeed quite sparse (comparable to the sparsity of solutions learned by single-task learning), even when the auxiliary task and main task features are disjoint (Fig.~\ref{fig:supp_sparsity}c). Moreover, we find that MTL also learns sparse solutions (Fig.~\ref{fig:supp_sparsity}d). In particular, as expected, the effective features on tasks with overlapping features is equal to the number of total unique features. Moreover, we observe that when main task and auxiliary task features are shared, PT+FT and MTL networks exhibit higher alignment between learned features and ground-truth features than single-task-trained networks, especially when main task samples are limited (Fig.~\ref{fig:supp_sparsity}e).  This provides a mechanistic underpinning for the relationship between the inductive bias of PT+FT that we describe in the main text and its performance benefits.

\subsubsection{Nested feature selection regime allows network to prioritize a sparse subset of feature clusters learned during pretraining}

In the main text we describe the ``nested feature selection'' regime, which occurs at intermediate values of the ratio between ground-truth main task feature coefficients and pretrained network weight scale.  In this regime, networks can more efficiently learn main tasks that make use of a subset of the features used in the auxiliary task. Importantly, they still maintain a bias towards reusing features from the auxiliary task, as we show for diagonal linear networks and ReLU networks in Fig.~\ref{fig:supp-nested}.  Here we show that networks in this regime (obtained most clearly in the shallow ReLU network case when networks are rescaled by a value of 1.0 after pretraining) indeed learn very sparse (effectively 1-feature) solutions when the ground-truth main task consists of a single auxiliary task features (Fig.~\ref{fig:supp_sparsity}e, right).  By contrast, networks with weights rescaled by a factor of 10.0 following pretraining exhibit no such nested sparsity bias (consistent with lazy-regime behavior). Similarly, multi-task networks cannot exhibit such a bias in their internal representation as they still need to maintain the features needed for the main task (Fig.~\ref{fig:supp_sparsity}e, left).  Additionally, supporting the idea that the nested feature selection regime allows networks to benefit from feature reuse, we find that networks in this regime exhibit a higher alignment score with the ground-truth teacher network when the main task features are a subset of the auxiliary task features compared to when they are disjoint from the auxiliary task features (Fig.~\ref{fig:supp_sparsity}g).  This alignment benefit is mostly lost when networks are rescaled by a factor of 0.1 following pretrainning (a signature of rich-regime-like behavior).

\begin{figure}
    \centering
    \includegraphics{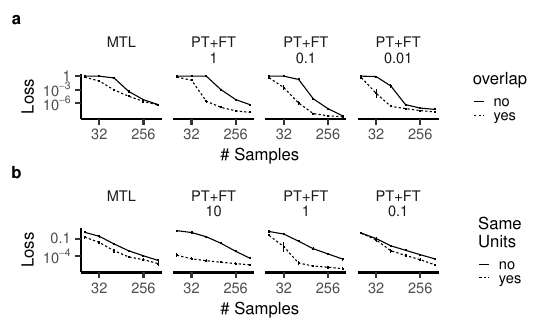}
    \caption{Comparison between tasks with sparse main task features that are either subsets of the auxiliary task features or new features. Networks are trained with MTL or with PT+FT, potentially with rescaling (as indicated by the number). \textbf{a}, Diagonal linear networks trained on five main task features. \textbf{b}, ReLU networks trained on a teacher network with one feature. We see that MTL (to some extent) and PT+FT can benefit from such an overlap, but for small rescaling values, this benefit becomes smaller.}
    \label{fig:supp-nested}
\end{figure}

\section{Further evaluations of the rescaling method for finetuning}
\label{app:deepnets}

\begin{figure*}[t!]
    \centering
\includegraphics{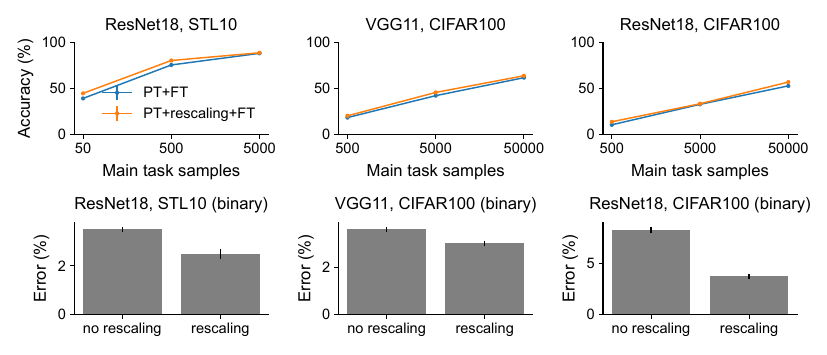}
    \caption{Results for finetuning deep convolutional networks trained on ImageNet, with/without weight rescaling (factor of 0.5) prior to finetuning.}
    \label{fig:supp_deepnets}
\end{figure*}

To evaluate the robustness / general-purpose utility of our suggested approach of rescaling network weights following pretraining, we experimented with finetuning convolutional networks pretrained on ImageNet on downstream classification tasks: pretrained ResNet-18 finetuned on CIFAR100, pretrained VGG11 finetuned on CIFAR100, and pretrained ResNet-18 finetuned on STL-10.  We experimented both with finetuning on the full multi-way classification task, and also on binary classification tasks obtained by subsampling pairs of classes from the main task dataset (which we found exposes performance differences more strongly). Due to computational constraints, we did not sweep over the choice of the rescaling factor, but simply used a factor of 0.5 in all cases.  We find that rescaling improves finetuning performance, to varying degrees, in all of our experiments (Fig.~\ref{fig:supp_deepnets}).


\section{Analysis of representations learned in the nested feature selection regime: bridging the gap from shallow to deep networks}
\label{app:ensd}

We were interested in testing whether our theoretical understanding of shallow networks is truly responsible for the behavior of deeper networks (with more direct evidence than performance results / sample complexity).  Specifically, we sought to understand whether the observed benefit of rescaling network weights following pretraining (Fig.~\ref{fig:dnns}b, Appendix~\ref{app:deepnets}) relates to the nested feature selection regime we characterized in shallow networks.  Doing so is challenging, as the space of ``features'' learnable by a deep network is difficult to characterize explicitly (making the feature clustering analysis employed in Appendix~\ref{app:sparsity_analysis} inapplicable).  To circumvent this issue, we propose a signature of nested feature selection that can be characterized without knowledge of the underlying feature space.  Specifically, we propose to measure (1) the \emph{dimensionality} of the network representation pre- and post-finetuning, and (2) the extent to which the representational structure post-finetuning is shared with / inherited from that of the network following pretraining prior to finetuning.

We employ the commonly used \emph{participation ratio} \cite[PR;][]{gao_theory_2017} as a measure of dimensionality.  For an $n \times p$ matrix $\mathbf{X}$ representing $n$ mean-centered samples of p-dimensional network responses, with a $p \times p$ covariance matrix $\mathbf{C}_X = \frac{1}{n} \mathbf{X}^\top \mathbf{X}$, the participation ratio is defined as

\begin{align}
PR(X) = \frac{\left(\sum_{i=1}^p \lambda_i\right)^2}{\sum_{i=1}^p \lambda_i^2} = \frac{trace\left(\mathbf{C}_X\right)^2}{trace\left(\mathbf{C}_X^2\right)} \nonumber \\
= \frac{trace\left(\mathbf{X}^\top \mathbf{X} \right)^2}{trace\left(\mathbf{X}^\top \mathbf{X} \mathbf{X}^\top \mathbf{X} \right)}
\end{align}

where $\lambda_i$ are the eigenvalues of the covariance matrix $\mathbf{C}_X$.  The PR scales from $1$ to $p$ and measures the extent to which the covariance structure of responses $\mathbf{X}$ is dominated by a few principal components or is spread across many.  We argue that low-dimensional representations are a signature of networks that use a sparse set of features.  We confirm that this is the case in our teacher-student setting: networks in the rich regime, which are biased towards sparse solutions, learn representations with lower PR than networks in the lazy regime, which are not biased toward sparse solutions (Fig.~\ref{fig:supp_ensd}a).

\begin{figure*}[t!]
    \centering
    \includegraphics{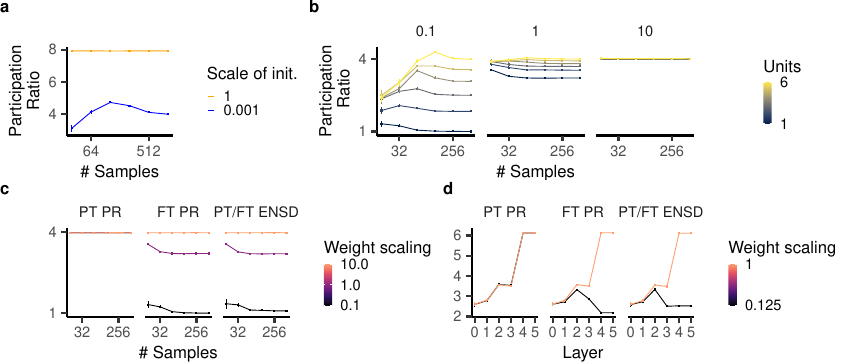}
    \caption{Dimensionality of the network representations before and after finetuning. \textbf{a}, Participation ratio of the ReLU networks' internal representation after training on a task with six teacher units. \textbf{b}, Participation ratio of the network representation after finetuning on the nested sparsity task with different weight rescalings. \textbf{c}, Participation ratio before (left panel) and after finetuning (middle panel) and the effective number of shared dimensions between the two representations (right panel). Small weight scaling decreases the participation ratio after training. \textbf{d}, The same quantities for ResNet18 before and after finetuning (see also Fig.~\ref{fig:dnns}c).}
    \label{fig:supp_ensd}
\end{figure*}

Our measure of shared dimensionality between two representations is the \emph{effective number of shared dimensions} (ENSD) introduced by \citeauthor{giaffar_effective_2023} \cite{giaffar_effective_2023}.  The ENSD for an $n \times p$ matrix of responses $\mathbf{X}$ from one network and an $n \times p$ matrix of responses $\mathbf{Y}$ from another network, denoted $ENSD(X, Y)$, is given by

\begin{align}
\frac{ trace\left(\mathbf{Y}^\top \mathbf{X} \mathbf{X}^\top \mathbf{Y} \right) \cdot trace\left(\mathbf{X}^\top \mathbf{X} \right) \cdot trace\left(\mathbf{Y}^\top \mathbf{Y} \right)}{trace\left(\mathbf{X}^\top \mathbf{X} \mathbf{X}^\top \mathbf{X} \right) \cdot trace\left(\mathbf{Y}^\top \mathbf{Y} \mathbf{Y}^\top \mathbf{Y} \right)}
\end{align}

This measure is equal to the centered kernel alignment (CKA), a measure of similarity of two network representations \cite{kornblith_similarity_2019}, multiplied by the geometric mean of the participation ratios of the two representations.  It measures an intuitive notion of ``shared dimensions'' --- for example, if $\mathbf{X}$ consists of 10 uncorrelated units, if $\mathbf{Y}$ is taken from a subset of five of those units, the ENSD(X, Y) will be 5.  If $\mathbf{Y}$ is taken to be five uncorrelated units that are themselves uncorrelated with all those in $\mathbf{X}$, the ENSD(X, Y) will be zero.

Intuitively, the PR and ENSD of network representations pre- and post-finetuning capture the key phenomena of the nested feature selection regime.  In a case in which the main task uses a subset of the features of the auxiliary task, if the network truly extracts this sparse subset of features, we expect the dimensionality of network after finetuning to be lower than after pretraining ($PR(\mathbf{X}_{FT}) < PR(\mathbf{X}_{PT})$), and for nearly all of the representational dimensions expressed by the network post-finetuning to be inherited from the network state after pretraining ($ENSD(\mathbf{X}_{PT}, \mathbf{X}_{FT}) \approx PR(\mathbf{X}_{FT})$). By contrast, networks not in the nested feature selection regime should exhibit an $\ell_2$-like rather than $\ell_1$-like bias with respect to features inherited from pretraining and thus not exhibit a substantial decrease in dimensionality during finetuning.

We show that this description holds in our nonlinear teacher-student experiments.  Networks that we identified as being in the ``nested feature selection'' regime (weights rescaled by 1.0 following pretraining), and also networks in the rich regime, exhibit decreased PR following finetuning (Fig.~\ref{fig:supp_ensd}b). By contrast, lazy networks (weights rescaled by 10.0 following pretraining) exhibit no dimensionality decrease during finetuning. Additionally (see Fig.~\ref{fig:supp_ensd}c), the ENSD between pretrained (PT) and finetuned (FT) networks is almost identical to the dimensionality of the finetuned representation (PR FT).  

Strikingly, we observe very similar behavior in our ResNet-18 model pretrained on 98 CIFAR-100 classes and finetuned on the 2 remaining classes (Fig.~\ref{fig:supp_ensd}d), when we apply our method of rescaling weights post-finetuning.  Analyzing the PR and ENSD of the outputs of different stages of the network following pretraining and following finetuning, we see that dimensionality decreases with finetuning, and ENSD between the pretrained and finetuned networks is very close to the PR of the finetuned network.  Moreover, this phenomenology is only observed when we apply the weight rescaling method; finetuning the raw pretrained network yields no dimensionality decrease.  These results suggest that the success of our rescaling method may indeed be attributable to pushing the network into the nested feature selection regime.

\newpage
\section*{NeurIPS Paper Checklist}

\begin{enumerate}

\item {\bf Claims}
    \item[] Question: Do the main claims made in the abstract and introduction accurately reflect the paper's contributions and scope?
    \item[] Answer: \answerYes{} 
    \item[] Justification: We characterize the inductive bias associated with MTL and PT+FT by characterizing existing penalties and deriving a new penalty for PT+FT in ReLU networks, which we do in Section 3. We then test these inductive biases to understand how they manifest in practice in Section 4 and find that they are relevant for large-scale neural network architectures in Section 5.
    \item[] Guidelines:
    \begin{itemize}
        \item The answer NA means that the abstract and introduction do not include the claims made in the paper.
        \item The abstract and/or introduction should clearly state the claims made, including the contributions made in the paper and important assumptions and limitations. A No or NA answer to this question will not be perceived well by the reviewers. 
        \item The claims made should match theoretical and experimental results, and reflect how much the results can be expected to generalize to other settings. 
        \item It is fine to include aspirational goals as motivation as long as it is clear that these goals are not attained by the paper. 
    \end{itemize}

\item {\bf Limitations}
    \item[] Question: Does the paper discuss the limitations of the work performed by the authors?
    \item[] Answer: \answerYes{} 
    \item[] Justification: We discuss limitations in the latter half of Section 6.
    \item[] Guidelines:
    \begin{itemize}
        \item The answer NA means that the paper has no limitation while the answer No means that the paper has limitations, but those are not discussed in the paper. 
        \item The authors are encouraged to create a separate "Limitations" section in their paper.
        \item The paper should point out any strong assumptions and how robust the results are to violations of these assumptions (e.g., independence assumptions, noiseless settings, model well-specification, asymptotic approximations only holding locally). The authors should reflect on how these assumptions might be violated in practice and what the implications would be.
        \item The authors should reflect on the scope of the claims made, e.g., if the approach was only tested on a few datasets or with a few runs. In general, empirical results often depend on implicit assumptions, which should be articulated.
        \item The authors should reflect on the factors that influence the performance of the approach. For example, a facial recognition algorithm may perform poorly when image resolution is low or images are taken in low lighting. Or a speech-to-text system might not be used reliably to provide closed captions for online lectures because it fails to handle technical jargon.
        \item The authors should discuss the computational efficiency of the proposed algorithms and how they scale with dataset size.
        \item If applicable, the authors should discuss possible limitations of their approach to address problems of privacy and fairness.
        \item While the authors might fear that complete honesty about limitations might be used by reviewers as grounds for rejection, a worse outcome might be that reviewers discover limitations that aren't acknowledged in the paper. The authors should use their best judgment and recognize that individual actions in favor of transparency play an important role in developing norms that preserve the integrity of the community. Reviewers will be specifically instructed to not penalize honesty concerning limitations.
    \end{itemize}

\item {\bf Theory Assumptions and Proofs}
    \item[] Question: For each theoretical result, does the paper provide the full set of assumptions and a complete (and correct) proof?
    \item[] Answer: \answerYes{} 
    \item[] Justification: We derive 1) the penalty associated with MTL in ReLU networks in Appendix~\ref{app:mt-relu}, 2) the penalty associated with PT+FT in ReLU networks in Appendix~\ref{app:relu-ft}, and 3) the conservation law (Prop.~\ref{propnested}) in Appendix~\ref{app:proof-prop}.
    \item[] Guidelines:
    \begin{itemize}
        \item The answer NA means that the paper does not include theoretical results. 
        \item All the theorems, formulas, and proofs in the paper should be numbered and cross-referenced.
        \item All assumptions should be clearly stated or referenced in the statement of any theorems.
        \item The proofs can either appear in the main paper or the supplemental material, but if they appear in the supplemental material, the authors are encouraged to provide a short proof sketch to provide intuition. 
        \item Inversely, any informal proof provided in the core of the paper should be complemented by formal proofs provided in appendix or supplemental material.
        \item Theorems and Lemmas that the proof relies upon should be properly referenced. 
    \end{itemize}

    \item {\bf Experimental Result Reproducibility}
    \item[] Question: Does the paper fully disclose all the information needed to reproduce the main experimental results of the paper to the extent that it affects the main claims and/or conclusions of the paper (regardless of whether the code and data are provided or not)?
    \item[] Answer: \answerYes{} 
    \item[] Justification: We describe the way in which we generate the teacher-student experiments in Section~3.3 and further discuss our methods in detail in Appendix~\ref{app:methods}. Finally we are providing a codebase to reproduce all figure in the main text.
    \item[] Guidelines:
    \begin{itemize}
        \item The answer NA means that the paper does not include experiments.
        \item If the paper includes experiments, a No answer to this question will not be perceived well by the reviewers: Making the paper reproducible is important, regardless of whether the code and data are provided or not.
        \item If the contribution is a dataset and/or model, the authors should describe the steps taken to make their results reproducible or verifiable. 
        \item Depending on the contribution, reproducibility can be accomplished in various ways. For example, if the contribution is a novel architecture, describing the architecture fully might suffice, or if the contribution is a specific model and empirical evaluation, it may be necessary to either make it possible for others to replicate the model with the same dataset, or provide access to the model. In general. releasing code and data is often one good way to accomplish this, but reproducibility can also be provided via detailed instructions for how to replicate the results, access to a hosted model (e.g., in the case of a large language model), releasing of a model checkpoint, or other means that are appropriate to the research performed.
        \item While NeurIPS does not require releasing code, the conference does require all submissions to provide some reasonable avenue for reproducibility, which may depend on the nature of the contribution. For example
        \begin{enumerate}
            \item If the contribution is primarily a new algorithm, the paper should make it clear how to reproduce that algorithm.
            \item If the contribution is primarily a new model architecture, the paper should describe the architecture clearly and fully.
            \item If the contribution is a new model (e.g., a large language model), then there should either be a way to access this model for reproducing the results or a way to reproduce the model (e.g., with an open-source dataset or instructions for how to construct the dataset).
            \item We recognize that reproducibility may be tricky in some cases, in which case authors are welcome to describe the particular way they provide for reproducibility. In the case of closed-source models, it may be that access to the model is limited in some way (e.g., to registered users), but it should be possible for other researchers to have some path to reproducing or verifying the results.
        \end{enumerate}
    \end{itemize}

\item {\bf Open access to data and code}
    \item[] Question: Does the paper provide open access to the data and code, with sufficient instructions to faithfully reproduce the main experimental results, as described in supplemental material?
    \item[] Answer: \answerYes{} 
    \item[] Justification: We have attached a codebase that enables reproduction of all figures in the main text.
    \item[] Guidelines:
    \begin{itemize}
        \item The answer NA means that paper does not include experiments requiring code.
        \item Please see the NeurIPS code and data submission guidelines (\url{https://nips.cc/public/guides/CodeSubmissionPolicy}) for more details.
        \item While we encourage the release of code and data, we understand that this might not be possible, so “No” is an acceptable answer. Papers cannot be rejected simply for not including code, unless this is central to the contribution (e.g., for a new open-source benchmark).
        \item The instructions should contain the exact command and environment needed to run to reproduce the results. See the NeurIPS code and data submission guidelines (\url{https://nips.cc/public/guides/CodeSubmissionPolicy}) for more details.
        \item The authors should provide instructions on data access and preparation, including how to access the raw data, preprocessed data, intermediate data, and generated data, etc.
        \item The authors should provide scripts to reproduce all experimental results for the new proposed method and baselines. If only a subset of experiments are reproducible, they should state which ones are omitted from the script and why.
        \item At submission time, to preserve anonymity, the authors should release anonymized versions (if applicable).
        \item Providing as much information as possible in supplemental material (appended to the paper) is recommended, but including URLs to data and code is permitted.
    \end{itemize}

\item {\bf Experimental Setting/Details}
    \item[] Question: Does the paper specify all the training and test details (e.g., data splits, hyperparameters, how they were chosen, type of optimizer, etc.) necessary to understand the results?
    \item[] Answer: \answerYes{} 
    \item[] Justification: We provide these details in Appendix~\ref{app:methods}
    \item[] Guidelines:
    \begin{itemize}
        \item The answer NA means that the paper does not include experiments.
        \item The experimental setting should be presented in the core of the paper to a level of detail that is necessary to appreciate the results and make sense of them.
        \item The full details can be provided either with the code, in appendix, or as supplemental material.
    \end{itemize}

\item {\bf Experiment Statistical Significance}
    \item[] Question: Does the paper report error bars suitably and correctly defined or other appropriate information about the statistical significance of the experiments?
    \item[] Answer: \answerYes{} 
    \item[] Justification: For all experiments, we report the mean $\pm$ two standard deviations.
    \item[] Guidelines:
    \begin{itemize}
        \item The answer NA means that the paper does not include experiments.
        \item The authors should answer "Yes" if the results are accompanied by error bars, confidence intervals, or statistical significance tests, at least for the experiments that support the main claims of the paper.
        \item The factors of variability that the error bars are capturing should be clearly stated (for example, train/test split, initialization, random drawing of some parameter, or overall run with given experimental conditions).
        \item The method for calculating the error bars should be explained (closed form formula, call to a library function, bootstrap, etc.)
        \item The assumptions made should be given (e.g., Normally distributed errors).
        \item It should be clear whether the error bar is the standard deviation or the standard error of the mean.
        \item It is OK to report 1-sigma error bars, but one should state it. The authors should preferably report a 2-sigma error bar than state that they have a 96\% CI, if the hypothesis of Normality of errors is not verified.
        \item For asymmetric distributions, the authors should be careful not to show in tables or figures symmetric error bars that would yield results that are out of range (e.g. negative error rates).
        \item If error bars are reported in tables or plots, The authors should explain in the text how they were calculated and reference the corresponding figures or tables in the text.
    \end{itemize}

\item {\bf Experiments Compute Resources}
    \item[] Question: For each experiment, does the paper provide sufficient information on the computer resources (type of compute workers, memory, time of execution) needed to reproduce the experiments?
    \item[] Answer: \answerYes{} 
    \item[] Justification: We describe these resources in Appendix~\ref{app:reproducibility}.
    \item[] Guidelines:
    \begin{itemize}
        \item The answer NA means that the paper does not include experiments.
        \item The paper should indicate the type of compute workers CPU or GPU, internal cluster, or cloud provider, including relevant memory and storage.
        \item The paper should provide the amount of compute required for each of the individual experimental runs as well as estimate the total compute. 
        \item The paper should disclose whether the full research project required more compute than the experiments reported in the paper (e.g., preliminary or failed experiments that didn't make it into the paper). 
    \end{itemize}
    
\item {\bf Code Of Ethics}
    \item[] Question: Does the research conducted in the paper conform, in every respect, with the NeurIPS Code of Ethics \url{https://neurips.cc/public/EthicsGuidelines}?
    \item[] Answer: \answerYes{} 
    \item[] Justification: This manuscript presents foundational research without direct path to negative applications. Further, the manuscript does not present any experiments involving human subjects.
    \item[] Guidelines:
    \begin{itemize}
        \item The answer NA means that the authors have not reviewed the NeurIPS Code of Ethics.
        \item If the authors answer No, they should explain the special circumstances that require a deviation from the Code of Ethics.
        \item The authors should make sure to preserve anonymity (e.g., if there is a special consideration due to laws or regulations in their jurisdiction).
    \end{itemize}

\item {\bf Broader Impacts}
    \item[] Question: Does the paper discuss both potential positive societal impacts and negative societal impacts of the work performed?
    \item[] Answer: \answerNA{} 
    \item[] Justification: This manuscript presents foundational research without direct societal impact.
    \item[] Guidelines:
    \begin{itemize}
        \item The answer NA means that there is no societal impact of the work performed.
        \item If the authors answer NA or No, they should explain why their work has no societal impact or why the paper does not address societal impact.
        \item Examples of negative societal impacts include potential malicious or unintended uses (e.g., disinformation, generating fake profiles, surveillance), fairness considerations (e.g., deployment of technologies that could make decisions that unfairly impact specific groups), privacy considerations, and security considerations.
        \item The conference expects that many papers will be foundational research and not tied to particular applications, let alone deployments. However, if there is a direct path to any negative applications, the authors should point it out. For example, it is legitimate to point out that an improvement in the quality of generative models could be used to generate deepfakes for disinformation. On the other hand, it is not needed to point out that a generic algorithm for optimizing neural networks could enable people to train models that generate Deepfakes faster.
        \item The authors should consider possible harms that could arise when the technology is being used as intended and functioning correctly, harms that could arise when the technology is being used as intended but gives incorrect results, and harms following from (intentional or unintentional) misuse of the technology.
        \item If there are negative societal impacts, the authors could also discuss possible mitigation strategies (e.g., gated release of models, providing defenses in addition to attacks, mechanisms for monitoring misuse, mechanisms to monitor how a system learns from feedback over time, improving the efficiency and accessibility of ML).
    \end{itemize}
    
\item {\bf Safeguards}
    \item[] Question: Does the paper describe safeguards that have been put in place for responsible release of data or models that have a high risk for misuse (e.g., pretrained language models, image generators, or scraped datasets)?
    \item[] Answer: \answerNA{} 
    \item[] Justification: This manuscript presents foundational research and there is no clear potential risk of misuse.
    \item[] Guidelines:
    \begin{itemize}
        \item The answer NA means that the paper poses no such risks.
        \item Released models that have a high risk for misuse or dual-use should be released with necessary safeguards to allow for controlled use of the model, for example by requiring that users adhere to usage guidelines or restrictions to access the model or implementing safety filters. 
        \item Datasets that have been scraped from the Internet could pose safety risks. The authors should describe how they avoided releasing unsafe images.
        \item We recognize that providing effective safeguards is challenging, and many papers do not require this, but we encourage authors to take this into account and make a best faith effort.
    \end{itemize}

\item {\bf Licenses for existing assets}
    \item[] Question: Are the creators or original owners of assets (e.g., code, data, models), used in the paper, properly credited and are the license and terms of use explicitly mentioned and properly respected?
    \item[] Answer: \answerYes{} 
    \item[] Justification: We use two standard vision datasets: CIFAR-100 and Imagenet. For CIFAR-100, we were unable to find an associated license. For Imagenet, the data is available for free to researchers for non-commercial use. Both websites hosting these datasets request a citation of the relevant paper which we provided.
    \item[] Guidelines:
    \begin{itemize}
        \item The answer NA means that the paper does not use existing assets.
        \item The authors should cite the original paper that produced the code package or dataset.
        \item The authors should state which version of the asset is used and, if possible, include a URL.
        \item The name of the license (e.g., CC-BY 4.0) should be included for each asset.
        \item For scraped data from a particular source (e.g., website), the copyright and terms of service of that source should be provided.
        \item If assets are released, the license, copyright information, and terms of use in the package should be provided. For popular datasets, \url{paperswithcode.com/datasets} has curated licenses for some datasets. Their licensing guide can help determine the license of a dataset.
        \item For existing datasets that are re-packaged, both the original license and the license of the derived asset (if it has changed) should be provided.
        \item If this information is not available online, the authors are encouraged to reach out to the asset's creators.
    \end{itemize}

\item {\bf New Assets}
    \item[] Question: Are new assets introduced in the paper well documented and is the documentation provided alongside the assets?
    \item[] Answer: \answerNA{} 
    \item[] Justification: We do not release new assets.
    \item[] Guidelines:
    \begin{itemize}
        \item The answer NA means that the paper does not release new assets.
        \item Researchers should communicate the details of the dataset/code/model as part of their submissions via structured templates. This includes details about training, license, limitations, etc. 
        \item The paper should discuss whether and how consent was obtained from people whose asset is used.
        \item At submission time, remember to anonymize your assets (if applicable). You can either create an anonymized URL or include an anonymized zip file.
    \end{itemize}

\item {\bf Crowdsourcing and Research with Human Subjects}
    \item[] Question: For crowdsourcing experiments and research with human subjects, does the paper include the full text of instructions given to participants and screenshots, if applicable, as well as details about compensation (if any)? 
    \item[] Answer: \answerNA{} 
    \item[] Justification: This paper does not involve crowdsourcing nor research with human subjects.
    \item[] Guidelines:
    \begin{itemize}
        \item The answer NA means that the paper does not involve crowdsourcing nor research with human subjects.
        \item Including this information in the supplemental material is fine, but if the main contribution of the paper involves human subjects, then as much detail as possible should be included in the main paper. 
        \item According to the NeurIPS Code of Ethics, workers involved in data collection, curation, or other labor should be paid at least the minimum wage in the country of the data collector. 
    \end{itemize}

\item {\bf Institutional Review Board (IRB) Approvals or Equivalent for Research with Human Subjects}
    \item[] Question: Does the paper describe potential risks incurred by study participants, whether such risks were disclosed to the subjects, and whether Institutional Review Board (IRB) approvals (or an equivalent approval/review based on the requirements of your country or institution) were obtained?
    \item[] Answer: \answerNA{} 
    \item[] Justification: This paper does not involve crowdsourcing nor research with human subjects.
    \item[] Guidelines:
    \begin{itemize}
        \item The answer NA means that the paper does not involve crowdsourcing nor research with human subjects.
        \item Depending on the country in which research is conducted, IRB approval (or equivalent) may be required for any human subjects research. If you obtained IRB approval, you should clearly state this in the paper. 
        \item We recognize that the procedures for this may vary significantly between institutions and locations, and we expect authors to adhere to the NeurIPS Code of Ethics and the guidelines for their institution. 
        \item For initial submissions, do not include any information that would break anonymity (if applicable), such as the institution conducting the review.
    \end{itemize}

\end{enumerate}

\end{document}